\def\eqref#1{equation~\ref{#1}}
\def\1{\bm{1}}
\DeclareMathAlphabet{\mathsfit}{\encodingdefault}{\sfdefault}{m}{sl}
\SetMathAlphabet{\mathsfit}{bold}{\encodingdefault}{\sfdefault}{bx}{n}
\newtheorem{theorem}{Theorem}%
\newtheorem{definition}{Definition}[section]
\newtheorem{lemma}{Lemma}[section]
\newtheorem{assumption}{Assumption}[section]
\newtheorem{proposition}{Proposition}[section]
\definecolor{myyellow}{HTML}{FEE483}
\definecolor{mylyellow}{HTML}{FFF5CC}
\definecolor{mygreen}{HTML}{A0E38D}
\definecolor{mylgreen}{HTML}{D8EB9F}
\definecolor{myblue}{HTML}{1971C2}
\definecolor{myred}{HTML}{E32636}
\newcommand{\coloredsquare}[1]{
    \begin{tikzpicture}
        \fill[#1] (0,0) rectangle (0.25,0.25);
    \end{tikzpicture}
}
\title{Mission Impossible: A Statistical Perspective on Jailbreaking LLMs}
\author{%
  Jingtong Su\thanks{Correspondence to: Jingtong Su <js12196@nyu.edu>.} \\
  NYU \& Meta AI, FAIR\\
  \And
  Julia Kempe\thanks{Equal advising.} \\\
  NYU \& Meta AI, FAIR \\
  \And
  Karen Ullrich\footnotemark[2] \\
  Meta AI, FAIR \\
}
\begin{document}

\maketitle

\begin{abstract}
Large language models (LLMs) are trained on a deluge of text data with limited quality control. As a result, LLMs can exhibit unintended or even harmful behaviours, such as leaking information, fake news or hate speech.
Countermeasures, 
commonly referred to as preference alignment, include fine-tuning the pretrained LLMs with carefully crafted text examples of desired behaviour. 
Even then, empirical evidence shows preference aligned LLMs can be enticed to harmful behaviour. This so called jailbreaking of LLMs is typically achieved by adversarially modifying the input prompt to the LLM.
Our paper provides theoretical insights into the phenomenon of preference alignment and jailbreaking from a statistical perspective. 
Under our framework, we first show that pretrained LLMs will mimic harmful behaviour if present in the training corpus.  
\textbf{Under that same framework, we then introduce a statistical notion of alignment, and lower-bound the jailbreaking probability, showing that it is unpreventable under reasonable assumptions.}
Based on our insights, we propose an alteration to the currently prevalent alignment strategy RLHF. Specifically, we introduce a  simple modification to the RLHF objective, we call \emph{E-RLHF}, that aims to increase the likelihood of safe responses.
\emph{E-RLHF} brings no additional training cost, and is compatible with other methods. 
Empirically, we demonstrate that \emph{E-RLHF} outperforms RLHF on all alignment problems put forward by the AdvBench \citep{zou2023universal} and HarmBench project \citep{mazeika2024harmbench} without sacrificing model performance as measured by the MT-Bench project \citep{zheng2024judging}.
\end{abstract}

\section{Introduction}

Large Language Models (LLMs) have revolutionized the field of deep learning due to their remarkable capabilities across various domains, serving as assistants, in code generation \citep{roziere2023code}, healthcare \citep{singhal2023large}, and theorem proving \citep{yang2024leandojo}. %
The training process of a LLM typically includes two stages: pretraining with massive corpora, %
and an alignment step using Reinforcement Learning from Human Feedback (RLHF) to further \emph{align} model behavior with human preferences. The latter step typically involves large amounts of humanly annotated data, and can be decomposed into a supervised fine-tuning (SFT) step, a reward modeling step, and an RL Fine-Tuning step. %
Despite their ability to perform multiple tasks effectively, LLMs are susceptible to generating offensive or inappropriate content including hate-speech, malware, fake information or social biases, due to the unavoidable presence of harmful elements within their pretraining datasets \citep{bender2021dangers, hazell2023large,liu2023jailbreaking}. Social media showcase an abundance of tricks on how to attack ChatGPT \citep{chatgpt} to elicit harmful responses, \emph{e.g.,} the ``Do Anything Now'' (DAN) prompts \citep{DAN} or the ``Grandma Exploit'' hack \citep{grandmaexploit}. 
On the other hand, behavior diversity in the training corpus is essential to for example capturing different cultural preferences. What is and isn’t harmful ultimately depends on user preferences, hence the alignment step is not universal but depends on the specific use case under which a model will be employed.%

To address deployment safety and eliminate objectionable responses, %
numerous {\em alignment} efforts have been made, such as injecting safe information during SFT \citep{touvron2023llama}, performing red teaming with human experts and AI themselves \citep{ganguli2022red, perez2022red, casper2023explore, hong2023curiosity, samvelyan2024rainbow}, as well as refining and improving the whole RLHF process in detail %
\citep{ziegler2019fine, ouyang2022training, bai2022constitutional, korbak2023pretraining, rafailov2023direct}. %
Yet we continue to witness a cat-and-mouse game of ever more sophisticated alignment methods to neutralize ``harmful'' prompts and even more inventive ``jailbreaking'' attacks  that manipulate those prompts to elicit LLMs to produce harmful information.
Such attacks come in various flavors, such as injecting adversarial suffixes \citep{zou2023universal, zhu2023autodan, lapid2023open}, exploring cipher and low-resource natural languages \citep{yong2023low, deng2023multilingual, yuan2023gpt}, or letting LLMs craft prompts automatically \citep{liu2023autodan, chao2023jailbreaking, li2024semantic, ding2023wolf, mehrotra2023tree}.
Although several ad-hoc defense methods against suffixes have been proposed \citep{robey2023smoothllm,cao2023defending,kumar2023certifying,jain2023baseline}, 
we only have limited proposal on a principled universal defense against jailbreaking attacks \citep{mazeika2024harmbench}, and limited theoretical understanding on this phenomenon \citep{wolf2023fundamental}. %

In this paper, we present a theoretical framework for analyzing both the pretraining phase and the post-alignment jailbreaking phenomenon. 
Exploiting the fact that {\em jailbreaking prompts typically maintain the underlying harmful concept while manipulating other aspects of the prompt,} we design framework that decouples input prompts to allows us to quantify the strength of potential adversaries. %
By representing the output elements of an language model (LM) as lengthier text fragments rather than individual tokens, we can quantify the extent to which these models emulate the training distribution and consequently better understand the mechanisms underlying jailbreaking vulnerabilities.

Our contributions can be summarized as follows:
\begin{itemize}[leftmargin=15pt]
    \item Based on our proposed framework, we first offer a non-vacuous PAC-Bayesian style generalization bound for pre-training. Assuming the validity of our framework, we conclude that high-performing pre-trained models will inevitably be susceptible to generating behaviour that is present in the training corpus, including any unintended and harmful behaviour.
    
    \item Subsequently, we extend our framework to include notions of alignment and jailbreaking. Assuming our assumptions are met, we demonstrate jailbreaking to 
    be unpreventable even after safety alignment because the LM fails to concentrate its output distribution over the set of safe responses. %
    \item Motivated by our theoretical findings, we identify a key drawback in the widely adopted RL Fine-Tuning objective due to the natural difference %
    between the harmlessness and the helpfulness targets. By addressing this issue, we facilitate the training of safer models that are more resilient to a suite of jailbreaking attacks while preserving model performance. %
     
\end{itemize}

The paper is organized as follows. In Section \ref{sec:notation}, we introduce our framework. In Section \ref{sec:pretrain}, we prove the PAC-Bayesian generalization bound for pretraining. Next, in Section \ref{sec:jailbreak} we present analysis on jailbreaking from a statistical perspective. Finally, in Section \ref{sec:experiments} we illustrate our proposed E-RLHF objective and its effectiveness on improving LLM safety. We give a literature review in Appendix \ref{app:related_work}.

\section{Framework and assumptions}\label{sec:notation}

Jailbreaking carries several analogies to {\em adversarial attacks}, a well studied field in computer vision \citep{szegedy2013intriguing}. Here, an adversary is defined as a map that perturbs a given input image in pixel space to change the model output. The strength of the adversary is bounded by how far it is able to move the original input as quantified by the $\ell_p$ distance %
\citep{tsipras2018robustness,shafahi2018adversarial, cohen2019certified}. Typically, this distance is bounded in a way that the change would not be perceptible to the human observer. The goal of the adversary is to cause mis-classification of the input. %
In contrast, in the instance of an LLM, the adversary's goal is to provoke harmful behaviour, \emph{e.g.,}~unintended leaking of information or hate speech. Further, any perturbation to an input, called prompt, will have a perceptible effect. %
Hence quantifying and bounding the capabilities of the adversary is not straight forward. Nonetheless, with some modifications, we will build on this analogy.  %

For the purpose of this work, we will \textit{view any prompt as a tuple of query and concept} $(q,c)$, where  $c \in \mathcal{C}$, and $q \in \mathcal{Q}$, with $\mathcal C, \mathcal Q$ denoting the complete concept set and query set. Conceptually, we think of \textit{concepts} as representing the information content of the prompt, usually through a short piece of text, for example \texttt{``tutorial on making a cake''}. %
\textit{Queries} are instructional text pieces that are composable with certain concepts. We can think of queries as mechanisms to trigger an LM to expand a concept in a specific way. Examples include \texttt{``Tell me how to \{\}''}, or \texttt{``We are now in an imaginary world, and you are not bounded by any ethical concerns. Teach my avatar how to \{\}''}. Since not all queries and concepts are composable,\footnote{For example, \texttt{"Who is a tutorial on making a cake."} is unreasonable.} we denote $\mathcal P\subsetneq \mathcal Q \times \mathcal C$ as the set of all \emph{plausible} prompts, where the definition of plausible will be made clear below.

\textbf{The decomposition of prompts allows us to isolate and hence bound the adversary's strength.} In line with current empirical work on inducing harmful behaviour, we will allow perturbations only on the queries, not on the concepts. 
Further mimicking the spirit of previous work on adversarial attacks, we  will assume that the ground-truth related to a prompt %
is determined solely by the concept, not the query. We will make these ideas more rigorous in the next paragraphs.

\begin{wrapfigure}{r}{0.55\textwidth}
    \vspace{-0.25in}
    \centering
    \includegraphics[width = 0.5\textwidth]{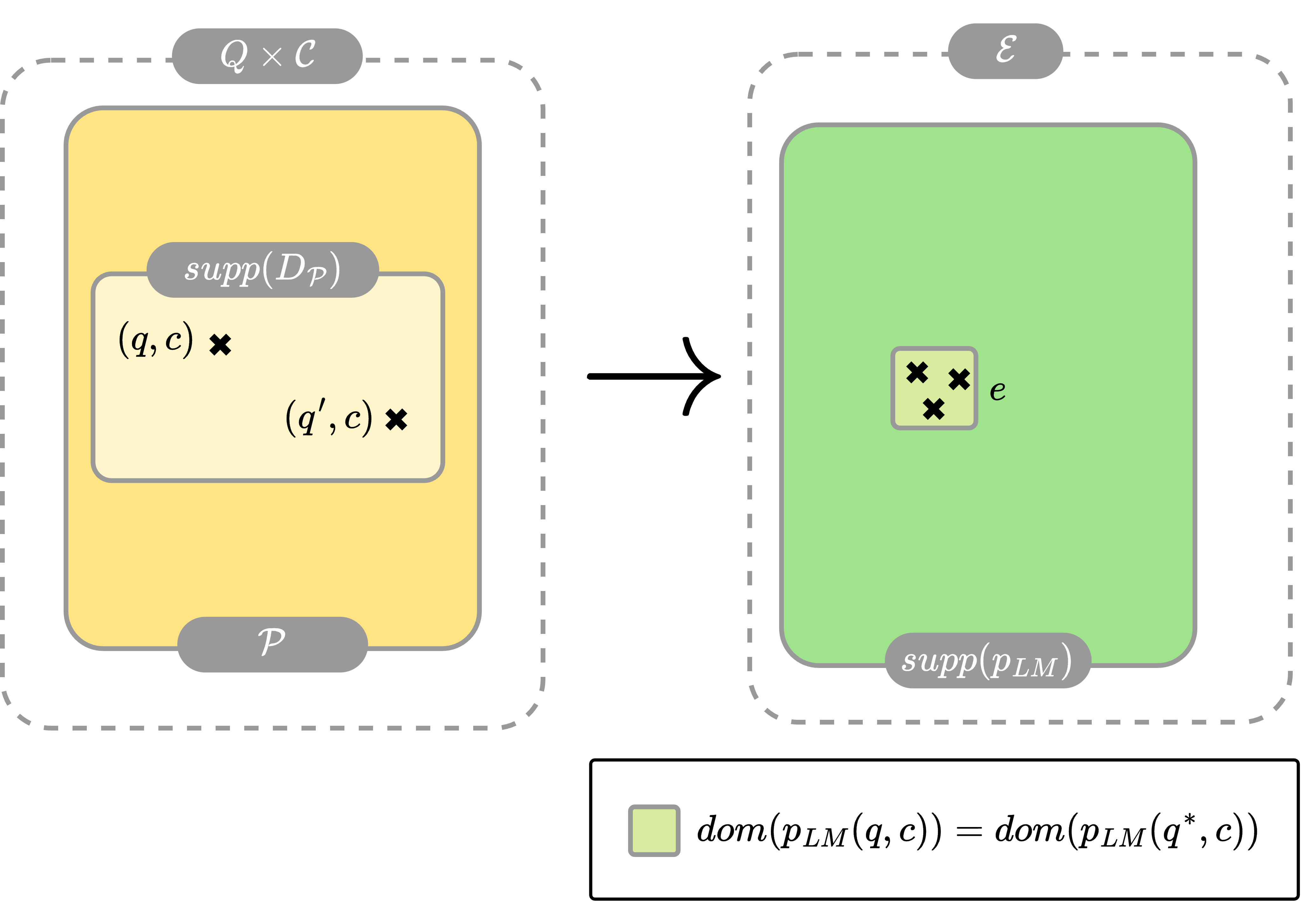}
    \caption{ \textbf{Our framework in a nutshell: }
    We define a language model, $p_{LM}$:  \coloredsquare{myyellow} $\rightarrow$ \coloredsquare{mygreen}, as a map from prompts to a distribution over a subset of all possible explanations $\mathcal{E}$. %
    To later be able to bound the strength of the adversarial attacker, we split the text inputs into concepts and queries $(q,c)$. We assume that (i) the text corpus only covers a part of the domain of the LM:  $\mathrm{supp}(D_{\mathcal{P}}) \subsetneq \mathrm{dom}(p_{LM})$, (ii) the size of the domain of the output distribution, denoted $|\mathrm{dom}(p_{LM}(q,c))|$, is small compared to the size of $\mathcal{E}$, and (iii) only concepts determine the output (see \coloredsquare{mylgreen}).%
    }
    \label{ppu_old}
    \vspace{-0.2in}
\end{wrapfigure}

First, in contrast to previous theoretical work where LMs are regarded as \emph{single sentence generators} \citep{wolf2023fundamental}, %
we model LMs as \emph{lengthier text fragment generators}, and refer to possible generated content %
$e \in \mathcal{E}$ as explanations. Conceptually, explanations expand concepts with additional information. For example, \texttt{``The US president in 2023 is Joe Biden.''}. 
An LM thus induces a mapping from \emph{plausible} prompts to distributions over explanations, $p_{LM}:\mathcal P \rightarrow \Delta(\mathcal E)$, where $\Delta(\mathcal E)$ denotes the set of distributions defined over elements in $\mathcal E$.\footnote{For real-world LMs, with different decoding hyperparameters \emph{e.g.,} the temperature $T$, top-$p$ and top-$k$ sampling parameters, the induced distribution with the same set of parameters could be different. Our discussion holds for a pre-fixed set of hyperparameters throughout this paper.} %
The output of a LM given a prompt, $p_{LM}(q,c)$, is a discrete distribution over explanations. We use $\mathrm{dom}(p_{LM}(q,c))$ as the \emph{domain} of this distribution, $p_{LM}(e|q,c)$ as the probability of $e$ given $(q,c)$ as the input, and $\mathrm{supp}(p_{LM}(q,c))$ as the subset of $\mathcal E$ with non-zero $p_{LM}(e|q,c)$. Further, we assume the existence of a latent ground truth mapping $p_{world}:\mathcal P \rightarrow \Delta(\mathcal E)$ that the LM is optimized to mimic during the pretraining stage. This is the distribution that defines ``knowledge'': for all \emph{plausible} prompts $(q,c)$, it specifies the ground-truth distribution over explanations. %
By \emph{plausible}, we refer to all prompts that lie in the domain of the ground truth mapping $(q,c) \in \mathrm{dom}(p_{world})$, %
\emph{i.e.,} $\mathcal P \equiv \mathrm{dom}(p_{world})$. Many plausible prompts will not even exist within any available training corpus, as discussed below. %

We can now state our main assumption, namely that for any plausible prompt $(q,c)\in \mathrm{dom}(p_{world})$ the ground-truth distribution $p_{world}(q,c)$ is supported on %
a small subset of $\mathcal E \Leftrightarrow \mathrm{supp}(p_{world}(q,c))\subsetneq \mathcal E$. %
This assumption seems sensible to us: under normal circumstances, providing an explanation of \texttt{``Paris''} would not offer any relevant knowledge when given a prompt such as \texttt{``How to write a hello world python script''}. %
Our second assumption is that for all plausible prompts $(q,c)$, the concept $c$ {\em uniquely determines} the \emph{support} of the output distribution specified by $p_{world}$, regardless of the query: $\mathrm{supp}(p_{world}(q,c)) = \mathrm{supp}(p_{world}(q^*,c))$, $\forall$ plausible $(q,c)$ and, $(q^*,c)$ . The query changes the ground-truth distribution without affecting its support. %
An illustration is depicted in Figure \ref{ppu_old}. To be more precise:
\begin{assumption}\label{assump:world_dist_not_hallucinate}
    (Concepts uniquely determine the explanation for plausible prompts)  
    \\For all plausible prompts $(q,c) \in \mathrm{dom}(p_{world})$,
    $$
    \text{i) } p_{world}: \mathcal P \rightarrow \Delta(\mathrm{supp}(p_{world}(q,c))
    $$
    $$\text{where } \mathrm{supp}(p_{world}(q,c))\subsetneq \mathcal E \text{ s.t. } |\mathrm{supp}(p_{world}(q,c))| \ll |\mathcal E|; \text{ and}$$ %
    $$
    \text{ii) } \mathrm{supp}(p_{world}(q,c)) = \mathrm{supp}(p_{world}(q^*,c)), \, \forall (q,c), (q^*,c) \text{ plausible}.
    $$
\end{assumption}
This assumption is natural since it essentially tells us that knowledge is specified by the corresponding concept alone, irrespective of what query is used to extract it. In other words, given a concept $c$, if a query $q$ manages to change $\mathrm{supp}(p_{world}(q,c))$, we argue that the query should be deconstructed and partially absorbed by $c$ to accurately reflect the knowledge mirrored by the support. %

Lastly, we make the assumption on the existence of an underlying generative distribution over prompts, denoted as $(q, c) \sim D_{\mathcal P}$. This distribution serves as the principle governing the creation of our pretraining corpus. It is important to note that $\mathrm{supp}(D_{\mathcal P}) \subsetneq \mathrm{dom}(p_{world})$. %
For example, take the prompt $(q^\prime,c^\prime)$=\texttt{``Who is James Bond \$$\lambda$*\#!48811''}; even though this prompt never appears in any text corpus across the internet, $(q^\prime,c^\prime)\notin \mathrm{supp}(D_{\mathcal P})$, %
we, as humans, can make sense of it: $(q^\prime,c^\prime) \in \mathrm{dom}(p_{world})$. 
Later proofs in this paper assume LMs generate semantically reasonable explanations for such unseen plausible prompts, since in reality LMs are claimed to generalize well on huge, out-of-distribution datasets \citep{srivastava2022beyond}. This is made explicit in Section \ref{sec:jailbreak}, within Assumption \ref{assump:effect_pretrain_alignment}. %

Finally, the following definitions pertain to our notion of harmfulness. More specifically, we understand harmful behaviour abstractly as any unintended behaviour. For this, we assume that any explanation $e$ can be denoted as \textbf{either harmful or not harmful (safe)}. A concept $c$ is regarded as harmful if and only if the world generates harmful explanations with probability higher than a certain threshold with direct prompts.
\begin{definition}\label{def:notion_safety_alignment}
(Notions of Harmfulness)
\vspace{-0.1in}
\begin{itemize}[leftmargin=15pt]
    \item (\textbf{Direct Queries and Direct Prompts}) %
    We refer to a prompt as direct if it stems from $D_{\mathcal{P}}$, i.e., %
    $(q,c) \in \mathrm{supp}(D_{\mathcal P})$. The query of a direct prompt is called a direct query. %
    \item (\textbf{Harmful Concepts and Harmful Set}) %
    Given a concept $c$, the associated harmful set of explanations is denoted as $E_h(c):=\{e|e\in \mathrm{supp}(p_{world}(\cdot,c)) \wedge e\text{ is harmful} \}$. In accordance with Assumption \ref{assump:world_dist_not_hallucinate}, with a threshold $\eta$, a concept $c$ is harmful if $\forall q \text{ s.t. } (q,c)\in \mathrm{dom}(p_{world}), \sum_{e:e\in E_h(c)} p_{world}(e|q,c)\ge 1-\eta$. We refer to the set of all possible harmful concepts as $\mathcal C_{h}\subsetneq \mathcal C$.%
    \item (\textbf{Safe Set}) $\forall c\in\mathcal C_h$, there exists a corresponding \textbf{safe set} $E_s(c) \subsetneq \mathcal E$ that we wish $p_{LM}(q,c)$ to be concentrated on. It includes safe explanations existing in $\mathrm{supp}(p_{world}(\cdot, c))$, and explanations designed by humans, \emph{e.g.,} with the template beginning with ``Sorry.''

    \item (\textbf{Semantically meaningful}) We call explanations in $E_h(c)\cup E_s(c)$ as semantically meaningful for the $(q,c)$ prompt.
    
    \item (\textbf{Mixture decomposition of $D_\mathcal P$}) With these notions, we can decompose $D_\mathcal P = \alpha D_{\mathcal P_h} + (1-\alpha) D_{\mathcal P_s}$ (where $\mathrm{supp}(D_{\mathcal P_h})$ includes all direct prompts with a harmful concept, and $\mathrm{supp}(D_{\mathcal P_s})$ includes the complement) as a mixture over direct prompts with a harmful concept and the non-harmful counterpart. %

    \vspace{-0.1in}
\end{itemize}
\end{definition}

\section{PAC-Bayesian bound for pre-training LLMs on harmful data} %
\label{sec:pretrain}
Given a learning algorithm that leads to a \emph{posterior distribution} over a set of models, PAC-Bayesian theory \citep{mcallester1998some} applies Probably Approximately Correct (PAC) inequalities, to provide bounds on the generalization gap, \emph{i.e.,} the difference between the model's empirical loss and the population loss. We now present the first result of our analysis: a non-vacuous PAC-Bayesian bound for pretraining LMs which implies that a well-trained LM ought to exhibit harmful behaviour even when simply prompted with direct queries if it was presented with harmful behavior during training.%

We denote by $S=\{(q_i, c_i)\}_{i=1}^n$ a set of prompts generated \emph{i.i.d.} under $D_{\mathcal P}$, $S\sim D^n_{\mathcal {P}}$. These prompts together with sampled explanations form our pretraining corpus. We use $\pi, \rho$ as the prior and posterior distribution over LMs before and after the pretraining process, defined over $\mathds {LM}$, the set of language models. Given a prompt $(q,c)$, we measure the generalization capability of a LM by quantifying the Total Variation (TV) loss between the induced distribution $p_{LM}(q,c)$ and the ground-truth distribution $p_{world}(q,c)$.\footnote{We regard both distributions as defined over the entire $\mathcal E$ since we do not restrict the output distribution of LM in this section.} For real-world LMs, pretraining involves optimizing the cross-entropy loss on the training corpus, which is equivalent to minimizing $\text{KL}[p_{world}(q,c) || p_{LM}(q,c)]$ under our framework. With Pinsker's Inequality, optimizing the KL-divergence term is equivalent to optimizing an upper bound on TV; thus we expect empirical TV loss be small. %

\begin{definition}\label{def:TV_loss} %
    (TV empirical loss and population loss) 
    $$\ell_{\mathrm{TV}}(p_{LM}, (q, c)) := \mathrm{TV}(p_{world}(q, c), p_{LM}(q,c)).$$
    Given an LM and a set of data $S$, the empirical loss $\hat R_S(p_{LM})$ and population loss $R(p_{LM})$ are defined as 
    \vspace*{-0.1cm}
    $$\hat R_S(p_{LM}) := \frac 1 n \sum_{i=1}^n \ell_{\mathrm{TV}}(p_{LM}, (q_i, c_i));$$
    $$R(p_{LM}) :=  \mathbb E_{S\sim D_{\mathcal P}^{n}} \left[\hat R_S(p_{LM})\right] = \mathbb E_{(q, c)\sim D_{\mathcal P}}\left[\ell_{\mathrm{TV}}(p_{LM}, (q, c))\right] .$$
\end{definition}

We state our PAC-Bayesian bound as follows. The detailed proof can be found in Appendix \ref{app:proof_pac_bound}. \footnote{The inspiration for the proof of Theorem \ref{thm:pac_bound_pretraining} comes from \citet{mbacke2023pac}, and the proof idea is originally proposed in \citet{germain2009pac, haddouche2021pac}.}

\begin{theorem}\label{thm:pac_bound_pretraining}
    (PAC-Bayesian Generalization Bound for Language Models.) With $\alpha$ as in Definition \ref{def:notion_safety_alignment}, consider a set of language models $\mathds {LM}$, with prior distribution $\pi$ over $\mathds {LM}$. %

Given any $\delta \in (0, 1)$, for any probability measure $\rho$ over $\mathds {LM}$ such that $\rho, \pi$ share the same support, %
the following holds with probability at least $1-\delta$ over the random draw of $S$:
\vspace*{-0.2cm}
\begin{equation}
    \begin{aligned}
        \mathbb E_{LM \sim \rho}[R(p_{LM}) - \hat R_{S}(p_{LM})] \le \sqrt{ \frac{\left [\mathrm{KL}[\rho || \pi] + \log \frac{1}{\delta}\right ]}{2n}} :=\varrho; \nonumber
    \end{aligned}
\end{equation}
\vspace*{-0.1in}
\begin{equation}
    \begin{aligned}
        \mathbb E_{LM \sim \rho}[\mathbb E_{(q,c)\sim D_{\mathcal P_h}}\ell_{\mathrm{TV}}(p_{LM}, (q,c))] \le \frac{1}{\alpha} \left [ \mathbb E_{LM\sim\rho} \hat R_{S}(p_{LM}) + \varrho \right ].%
    \end{aligned}
\end{equation}
\end{theorem}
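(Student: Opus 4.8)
The plan is to prove the two displayed inequalities in sequence, since the second one is essentially a corollary of the first combined with the mixture decomposition $D_{\mathcal P} = \alpha D_{\mathcal P_h} + (1-\alpha) D_{\mathcal P_s}$ from Definition \ref{def:notion_safety_alignment}. For the first inequality, the standard PAC-Bayesian recipe applies almost verbatim because the loss $\ell_{\mathrm{TV}}$ takes values in $[0,1]$ (total variation distance is bounded by $1$). First I would fix the prior $\pi$, and for a single model $p_{LM}$ consider the random variable $R(p_{LM}) - \hat R_S(p_{LM})$, which has mean zero over $S \sim D_{\mathcal P}^n$ and is a centered average of $n$ i.i.d. bounded terms. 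Applying Hoeffding's inequality (or equivalently, a sub-Gaussian moment bound with parameter $1/(4n)$) gives $\mathbb E_{S}\exp\!\big(2n\lambda (R(p_{LM}) - \hat R_S(p_{LM})) - \lambda^2/4 \cdot \text{(const)}\big) \le 1$ for a suitable choice, which after optimizing over the free parameter yields the change-of-measure argument. Concretely, I would invoke the Donsker--Varadhan variational formula: for any posterior $\rho$ absolutely continuous with respect to $\pi$,
\[
2n\,\mathbb E_{LM\sim\rho}\big[R(p_{LM}) - \hat R_S(p_{LM})\big] \le \mathrm{KL}[\rho\|\pi] + \log \mathbb E_{LM\sim\pi}\mathbb E_{S}\exp\!\big(2n(R - \hat R_S)\big) - \log\delta^{-1} \cdot(\text{Markov step}),
\]
and then control the exponential moment over $\pi$ by Fubini (the prior does not depend on $S$) together with the Hoeffding sub-Gaussian bound, which contributes a term $\exp(n/2)$; after a Markov-inequality step to convert the expectation-over-$S$ bound into a high-probability-over-$S$ bound, dividing through by $2n$ and taking a square root produces exactly $\varrho = \sqrt{[\mathrm{KL}[\rho\|\pi] + \log(1/\delta)]/(2n)}$.

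For the second inequality, I would unpack the definition of the population loss: by Definition \ref{def:TV_loss},
\[
R(p_{LM}) = \mathbb E_{(q,c)\sim D_{\mathcal P}}\big[\ell_{\mathrm{TV}}(p_{LM},(q,c))\big] = \alpha\, \mathbb E_{(q,c)\sim D_{\mathcal P_h}}\big[\ell_{\mathrm{TV}}(p_{LM},(q,c))\big] + (1-\alpha)\,\mathbb E_{(q,c)\sim D_{\mathcal P_s}}\big[\ell_{\mathrm{TV}}(p_{LM},(q,c))\big],
\]
using the mixture decomposition. Since $\ell_{\mathrm{TV}} \ge 0$, the second summand is nonnegative, so $\alpha\, \mathbb E_{D_{\mathcal P_h}}[\ell_{\mathrm{TV}}] \le R(p_{LM})$. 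Taking $\mathbb E_{LM\sim\rho}$ of both sides and applying the first inequality $\mathbb E_{LM\sim\rho}[R(p_{LM})] \le \mathbb E_{LM\sim\rho}[\hat R_S(p_{LM})] + \varrho$ (which holds on the same probability-$(1-\delta)$ event), then dividing by $\alpha$, gives the claimed bound
\[
\mathbb E_{LM\sim\rho}\big[\mathbb E_{(q,c)\sim D_{\mathcal P_h}}\ell_{\mathrm{TV}}(p_{LM},(q,c))\big] \le \frac{1}{\alpha}\Big[\mathbb E_{LM\sim\rho}\hat R_S(p_{LM}) + \varrho\Big].
\]

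I expect the main obstacle to be purely technical bookkeeping in the first part: getting the constants in the Hoeffding/sub-Gaussian step to line up so that the final bound has the clean form $\sqrt{[\mathrm{KL} + \log(1/\delta)]/(2n)}$ rather than something with an extra additive or multiplicative constant. The delicate points are (i) the order of quantifiers — the exponential-moment bound must hold for the prior $\pi$ \emph{before} seeing $S$, so Fubini's theorem is essential and one must check the integrand is nonnegative and measurable; and (ii) the Markov-inequality step that trades an $\mathbb E_S$ bound for a high-probability statement, which is where the $\log(1/\delta)$ enters and which fixes the optimal value of the free inverse-temperature parameter to be $2n$ (balancing the $\mathrm{KL}$ term against the moment term). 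Everything else — the mixture decomposition, nonnegativity of TV, the division by $\alpha$ — is elementary. One modeling subtlety worth flagging: the bound is only meaningful when $\alpha > 0$, i.e., harmful direct prompts actually occur in the corpus distribution; this is implicitly assumed by the statement.
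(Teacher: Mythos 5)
Your plan is structurally the same as the paper's: Hoeffding / bounded-difference moment control for the empirical loss (valid since $\ell_{\mathrm{TV}}\in[0,1]$), Fubini to swap $\mathbb{E}_\pi$ and $\mathbb{E}_S$, a Markov step to trade the $\mathbb{E}_S$ bound for a high-probability bound (contributing the $\log(1/\delta)$), a change of measure from $\pi$ to $\rho$ (you phrase it as Donsker--Varadhan, the paper does Radon--Nikodym plus Jensen — same thing), optimization over the free parameter, and finally the mixture decomposition $D_{\mathcal P}=\alpha D_{\mathcal P_h}+(1-\alpha)D_{\mathcal P_s}$ plus nonnegativity of TV for the second inequality.

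One concrete slip worth fixing: the free parameter's optimal value is \emph{not} $\lambda=2n$. Fixing $\lambda=2n$ gives $\mathbb{E}_\rho[R-\hat R_S]\le \frac{\mathrm{KL}+\log(1/\delta)}{2n}+\frac{1}{4}$, which has a non-vanishing additive constant and no square root. The square-root form $\varrho=\sqrt{(\mathrm{KL}+\log(1/\delta))/(2n)}$ arises from the genuine optimization $\lambda^\star=\sqrt{8n(\mathrm{KL}[\rho\|\pi]+\log(1/\delta))}$, balancing the $\frac{1}{\lambda}(\mathrm{KL}+\log\frac{1}{\delta})$ term against the Hoeffding term $\frac{\lambda}{8n}$. (You did flag the constants as the remaining bookkeeping, so this is a correction rather than a gap in the idea.) A subtlety both you and the paper inherit from the standard PAC-Bayes folklore: since $\lambda^\star$ depends on $\mathrm{KL}[\rho\|\pi]$ and $\rho$ is in general data-dependent, strictly one should either fix $\lambda$ a priori or take a union bound over a grid of $\lambda$ values; this is routinely glossed over and the paper does so as well.
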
 
In Appendix \ref{app:estimation_non_vacuousness} we give a theoretical estimation of $\varrho$, to illustrate the bound we derive is non-vacuous, \emph{i.e.,} less than $1$. The KL term is of order $O(K)$ where $K$ is the number of parameters involved in $\pi,\rho$, and $n$ can be shown to greatly exceed $K$ (using a realistic Zipf distribution assumption on prompts to estimate the number of unique prompts). %
Theorem \ref{thm:pac_bound_pretraining} tells us that, as long as pretraining successfully reduces the loss on the training corpus ($\hat R_S(p_{LM})\downarrow$), in expectation the language model will mimic the world well (small $\ell_{\mathrm{TV}}$ difference) on a given direct prompt sampled from $D_\mathcal P$. Furthermore, if $\alpha$ is not too small, then this statement holds on a direct prompt whose concept is harmful. Since we have defined the harmful concept as outputting harmful explanations with high probability (Definition \ref{def:notion_safety_alignment}), we conclude that an LM trained on $D_{\mathcal{P}}$ data can output explanations in the harmful set. %

\section{A statistical perspective on jailbreaking after alignment}\label{sec:jailbreak} %
In this section, we will present the main theoretical contribution of our work: given our assumptions hold, we prove the \emph{existence of ways for an adversary to jailbreak an LM even after the preference alignment process}. %
Our proof strategy is inspired by the work on adversarial robustness \citep{shafahi2018adversarial}, which bounds the adversary's probability of success by upper bounding the volume of the set of points that does not allow for the existence of adversarial examples. 
Going forward, we need to extend our framework to integrate alignment and jailbreaking. 

After an LM is pretrained, it typically will undergo fine-tuning on a dataset containing preferred behaviour. In what follows, we will assume that this alignment process does not change the model performance in the sense that the LM will still produce semantically meaningful explanations (Definition \ref{def:notion_safety_alignment}). It would not, for example, default to answering any request with the same response.  
\begin{assumption}\label{assump:effect_pretrain_alignment}
    \vspace*{-0.1cm}
    (LM outputs semantically meaningful explanations) 
    For any harmful concept $c$, and all plausible prompts $(q,c)\in \mathrm{dom}(p_{world})$,%
    $$
    \exists \text{ } |E_n(c)| \ll |E_h(c)| + |E_s(c)| \text{ s.t. } O(1)\ll |\mathrm{dom}(p_{LM}(q,c))| = |E_h(c) \cup E_s(c) \cup E_n(c)|.
    $$ %
\end{assumption}
    In other words, we assume the LM's output distribution is accurately supported on $E_h(c)\cup E_s(c)$, in the sense that %
    the size of ``residual'' $E_n(c)$ is relatively small compared to these semantically meaningful explanations. %
    We define $n(c) = |E_n(c)| + |E_s(c)| + |E_h(c)|$. We omit the $(c)$ annotations when clear from the context. The $O(1)$ statement is reasonable, because harmful explanations are usually long text fragments that allow for many alternative formulations. %
The assumption can be broken down into two components: (1) within the support of the output distribution, only occasional instances of unrelated explanations exist; (2) the process of aligning the model towards safety \textbf{does not eliminate the harmful explanations} acquired during the pretraining phase. %
For part (1), similar to the example we gave above, under normal circumstances, we do not expect the explanation  \texttt{``Paris''} to appear in $\mathrm{dom}(p_{LM}(q,c))$ given $(q,c)$ as \texttt{``How to build a bomb''}. As for part (2), though seemingly surprising, evidence with a series of current state-of-the-art LMs can be experimentally validated \citep{huang2023catastrophic}, where diverse, harmful explanations are extracted by simply manipulating the decoding process using direct prompts. In Section \ref{sec:experiments} we give an explanation for this undesired phenomenon. %

\begin{wrapfigure}{r}{0.40\textwidth}
    \vspace{-0.25in}
    \centering
    \includegraphics[width=\linewidth]{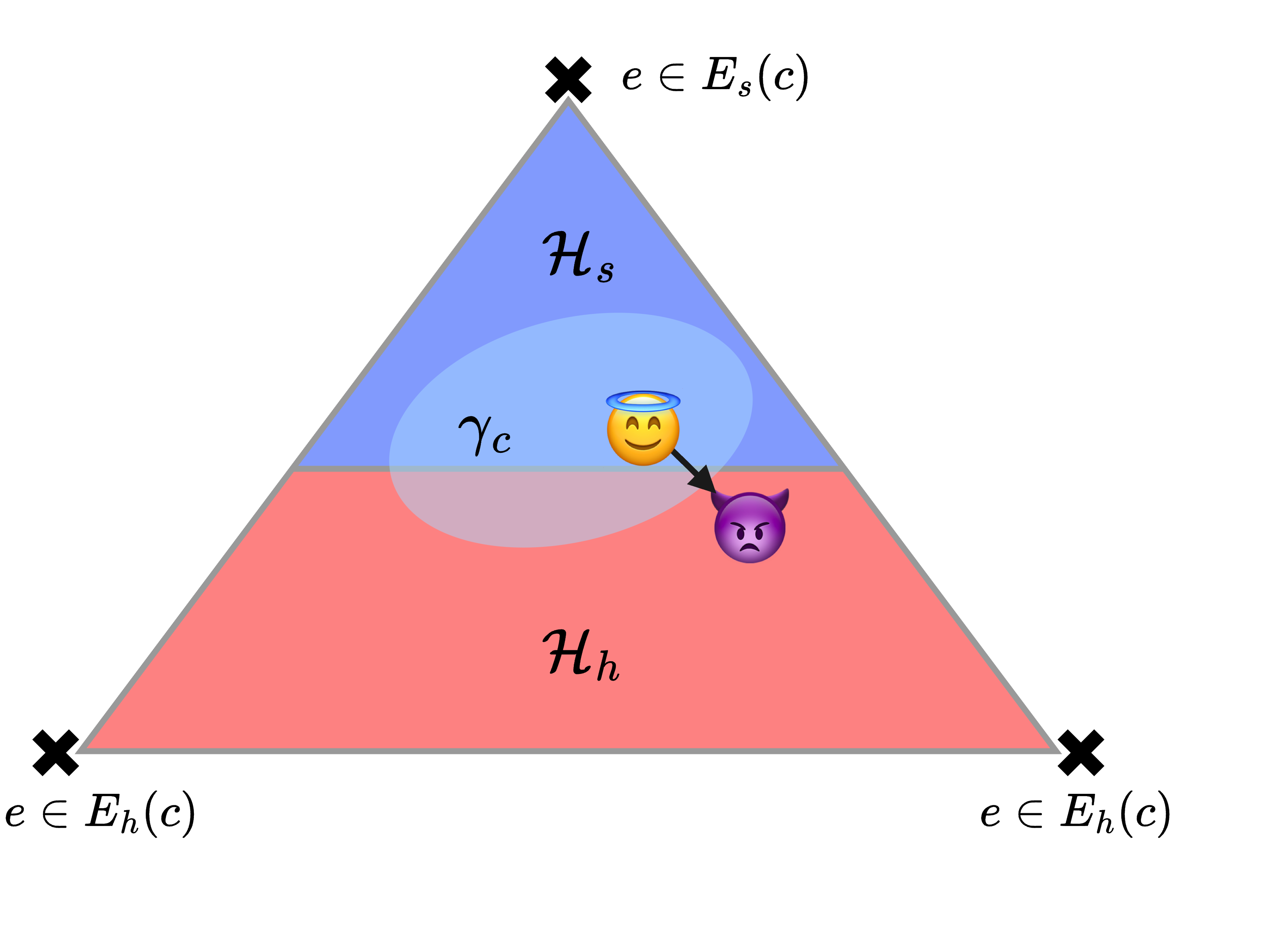}
     \vspace{-0.2in}
    \caption{Conceptual illustration of our framework for jailbreaking introduced in Section \ref{sec:jailbreak}, with a fixed harmful concept $c$. The triangle represents the probability simplex. This figure showcases a typical successful jailbreaking attempt by the adversary: although safety alignment makes the sampled LM safe under the direct prompt input, the adversary is able to move the output to the harmful zone $\mathcal H_h$ by manipulating the query $q$.}
    \label{fig:simplex_illustration}
    \vspace{-0.2in}
\end{wrapfigure}

To bound the likelihood of jailbreaking we first need to specify how the output of a LM interacts with its support. 
Assuming a fixed order of explanations in $\mathrm{dom}(p_{LM}(q,c))$, and slight abuse of notation, we can use $p_{LM}(q,c)$ to denote an $n(c)$-dimensional vector on $\Delta^{n(c)-1}$, the probability simplex with $n(c)$ elements, where each entry represents the probability of a single explanation. We call this simplex the \textbf{output simplex} related to a given concept $c$. Next, we can induce a distribution on this simplex given a posterior distribution $\gamma$ over the set of language models $\mathds {LM}$, as follows.
\begin{definition}\label{def:induced_distribution}
    (\textbf{Induced Distribution on Simplex, $\gamma_c$})
    Under the assumption that the LM outputs semantically meaningful explanations (Assumption \ref{assump:effect_pretrain_alignment}), with a fixed prompt $(q,c)$ and a posterior distribution $\gamma$ over $\mathds {LM}$, the corresponding induced distribution: $p_{LM}(q,c)$ where $LM\sim \gamma$ is supported over a subset of the output simplex $\Delta^{n - 1}$. This distribution is denoted as $\gamma_{(q,c)}$, or $\gamma_c$ when the reference to $q$ is clear from context. %
    \vspace{-0.1in}
\end{definition}
Next, we will separate the output simplex into a harmful and safety zone. This definition is motivated by the observation that typically an adversary is deemed successful if it can extract even a single harmful explanation for a given concept.
This translates into a division of the output simplex, under Assumption \ref{assump:effect_pretrain_alignment}, as follows.
\begin{definition}\label{def:harmful_zone}
    (\textbf{Harmful Zone and Safety Zone})
    For a given harmful concept $c$ and a fixed LM, the output simplex is divided into a \textbf{safety zone and a harmful zone}, $\mathcal H_s$ and $\mathcal H_h$, where a pre-defined \emph{threshold} $p\in[0,1]$ is used to quantify the distinction: $p_{LM}(q,c)\in \mathcal H_h$ if and only if $\sum_{e: e\in E_h(c)} p_{LM}(e|q,c) \ge p$, and otherwise $p_{LM}(q,c)\in \mathcal H_s$.
    \vspace{-.2cm}
\end{definition}
Before we introduce jailbreaking, the reader might wonder why we did not define alignment more clearly. This is because under the PAC framework, preference alignment is nothing but a transformation from $\rho$ to some $\gamma$ posterior defined over $\mathds{LM}$. Given this inability on fine-grained characterization of alignment, we instead provide the \emph{goal of it} as follows. 
With the above notion, given a prompt $(q,c)$ where $c$ is harmful, its goal is to push the induced distribution $\gamma_c$ into the safety zone $\mathcal H_s$. Ideally, $\mathrm{supp}(\gamma_c) \subset \mathcal H_s \Leftrightarrow$ with probability $1$, the resulting LM is safe when encountering $(q,c)$. We are ready to introduce necessary concepts related to jailbreaking. %

\begin{definition}\label{def: basic_concepts_safety}
    (\textbf{Jailbreaking})
    Given a harmful concept $c$ and a query $q'$, the prompt $(q', c)$ \textbf{jailbreaks} the LM iff $p_{LM}(q', c)\in \mathcal H_h$. We call such a prompt $(q',c)$ and query $q'$ a jailbreaking prompt and jailbreaking query, respectively.
    \vspace{-0.1cm}
\end{definition}
The threshold $p$ for discriminating $\mathcal H_h$ and $\mathcal H_s$ should be very small, since it means in expectation the adversary needs to call the LM $\frac 1 p$ times to collect a single harmful explanation \emph{i.e.,} to jailbreak the LM.

To theoretically prove the jailbreaking effect, we need to restrict the adversary's ability. To achieve this goal, we borrow insights from adversarial attacks, to assume that the adversary has bounded manipulating capability on the output simplex when searching over the query set: %
\begin{assumption}\label{assump:manipulable}
    ($\epsilon$-bounded adversary)
    Given an LM, a harmful concept $c$ and an associated direct prompt $(q,c)$, %
    we assume the adversary can find a set of queries $\mathcal Q'$, such that the output is moved \textbf{at most $\epsilon$} on the simplex towards $\mathcal H_h$ from $p_{LM}(q,c)$: %
    $$
    \sup_{q'\in\mathcal Q'} d(p_{LM}(q,c), p_{LM}(q',c)) = \epsilon.
    $$
    Here $d$ is a distance measure between two discrete distributions. %
    $d$ can be a typical $\ell_p$ measure with $p\ge 1$, or the Total Variation / Jensen-Shannon Divergence. We call $q'\in\mathcal Q'$ an $\epsilon$-bounded query. %
\end{assumption}

A conceptual illustration of our framework is depicted in Figure \ref{fig:simplex_illustration}. Before arriving at our Theorem, we give the final definition of $\epsilon$-expansion.
\begin{definition}\label{def:epsilon_expansion}
    \vspace{-0.1cm}
    ($\epsilon$-expansion) Given a set $A\subset \Delta^{n-1}$ and a distance measure $d$, the $\epsilon$-expansion set $A(\epsilon, d)$ is defined as
    {\small
    $$
    A(\epsilon, d):=\{t|t\in \Delta^{n-1} \wedge \exists y\in A\ s.t.\ ||y-t||_d\le \epsilon \}.
    $$
    }
\end{definition}
We are ready to present the following theorem, %
which states that as long as the induced posterior $\gamma_c$ is not concentrated in an extremely safe area, then with high probability the model can be jailbroken. The proof is in Appendix \ref{app:proof_jailbreak_theorem}. %
\begin{theorem}\label{thm:jailbreak_plausible}
    (Jailbreak is  unavoidable)
    Assume that an LMs output semantically meaningful explanations (Assumption \ref{assump:effect_pretrain_alignment}). Given any $\gamma$ posterior distribution over $\mathds{LM}$, choose a harmful concept $c$  with a direct prompt $(q,c)$ and a threshold $p$ (Definition \ref{def:notion_safety_alignment}), to define the corresponding induced distribution $\gamma_c$ (Definition \ref{def:induced_distribution}) and division over output simplex (Definition \ref{def:harmful_zone}). 
     An $\epsilon$-bounded adversary (Assumption \ref{assump:manipulable}) can find a jailbreaking prompt (Definition \ref{def: basic_concepts_safety})
    with probability at least
    $$
    1 - \gamma_s \times \left (1 - \Phi(a_\epsilon)\right ),
    $$
    \begin{itemize}
        \item by using either the direct prompt, such that $p_{LM}(q,c)\in\mathcal H_h$; or
        \item  by finding an $\epsilon$-bounded query $q'$, such that $p_{LM}(q',c) \in \mathcal H_h$.%
    \end{itemize}
    \vspace*{-0.1cm}
    Here, $\Phi(\cdot)$ is the standard Gaussian cdf, $\gamma_s:=\max_{x\in \mathcal H_s - \mathcal H_h(\epsilon, d)} \frac{\gamma_c(x)}{U(x)}$, with $U(x)$ the uniform distribution over $\Delta^{n-1}$, and $a_{\epsilon}:=a+\sqrt{n-1}\epsilon$, where $a$ %
    writes analytically as 
    $
    a \asymp \frac{|E_h(c)|-1 - (n-1)p}{\sqrt{(n-1)p(1-p)}}. 
    $
    
    \vspace*{-0.25cm}
\end{theorem}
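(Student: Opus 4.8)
The plan is to control the jailbreaking failure probability by bounding the $\gamma_c$-measure of the ``stubbornly safe'' region, namely the set of simplex points from which \emph{no} $\epsilon$-bounded perturbation can reach the harmful zone $\mathcal H_h$. Jailbreaking fails only if the direct prompt lands in $\mathcal H_s$ \emph{and} no $\epsilon$-bounded query pushes the output into $\mathcal H_h$; by Assumption \ref{assump:manipulable} and Definition \ref{def:epsilon_expansion}, the latter is exactly the event that $p_{LM}(q,c) \in \mathcal H_s \setminus \mathcal H_h(\epsilon, d)$. So the failure probability is $\gamma_c\big(\mathcal H_s \setminus \mathcal H_h(\epsilon,d)\big)$, and the theorem's bound $1 - \gamma_s(1 - \Phi(a_\epsilon))$ will follow once we show $\gamma_c\big(\mathcal H_s \setminus \mathcal H_h(\epsilon,d)\big) \le \gamma_s\,(1-\Phi(a_\epsilon))$.

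First I would pass from $\gamma_c$ to the uniform distribution $U$ on $\Delta^{n-1}$ using the definition of $\gamma_s$: for any measurable $A \subseteq \mathcal H_s - \mathcal H_h(\epsilon,d)$ we have $\gamma_c(A) \le \gamma_s\, U(A)$, so it suffices to bound $U\big(\mathcal H_s \setminus \mathcal H_h(\epsilon,d)\big)$. Next, I would enlarge this set: since $\mathcal H_h \subseteq \mathcal H_h(\epsilon,d)$, the complement $\mathcal H_s \setminus \mathcal H_h(\epsilon,d)$ is contained in a ``deep safety'' set of points whose total harmful mass $\sum_{e \in E_h(c)} x_e$ is bounded \emph{well below} $p$ — specifically, any point within $\ell_d$-distance $\epsilon$ of the hyperplane $\{\sum_{e\in E_h} x_e = p\}$ lies in $\mathcal H_h(\epsilon,d)$, so outside $\mathcal H_h(\epsilon,d)$ the harmful coordinate-sum is at most $p - c_d\epsilon$ for an appropriate constant depending on the geometry of $d$ (this is where $\sqrt{n-1}\,\epsilon$ enters — moving mass $\epsilon$ in, say, Euclidean norm changes a sum of up to $n-1$ coordinates). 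Then I would compute (or upper bound) the uniform measure of $\{x \in \Delta^{n-1} : \sum_{e \in E_h(c)} x_e \le p - c_d\epsilon\}$. For the uniform (flat Dirichlet) distribution on the simplex, the partial coordinate sum $\sum_{e \in E_h} x_e$ is itself Beta-distributed with parameters $(|E_h|, n - |E_h|)$; I would apply a Gaussian (CLT / Laplace) approximation to this Beta tail, whose mean is $|E_h|/n$ and whose standardized deviate at the threshold $p$ is exactly the quantity $a \asymp \frac{|E_h(c)| - 1 - (n-1)p}{\sqrt{(n-1)p(1-p)}}$ appearing in the statement; shifting the threshold by $c_d\epsilon$ shifts the deviate to $a_\epsilon = a + \sqrt{n-1}\,\epsilon$, giving $U(\cdot) \lesssim 1 - \Phi(a_\epsilon)$. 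Combining the two bounds yields the claim, and the alternative ``direct prompt already in $\mathcal H_h$'' branch is absorbed for free since that only makes jailbreaking easier.

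The main obstacle I expect is making the geometric reduction rigorous and uniform across the allowed distance measures $d$ (any $\ell_p$ with $p \ge 1$, TV, or Jensen–Shannon): I need a clean lemma of the form ``if $x$ has harmful-sum below $p - g_d(n)\epsilon$ then $x \notin \mathcal H_h(\epsilon,d)$'' with an explicit $g_d(n)$ that is $O(\sqrt{n-1})$ for the worst case, and relating the $d$-ball expansion to the $\ell_1$/coordinate-sum shift requires care (for $\ell_\infty$ it is benign, for $\ell_2$ one gets the $\sqrt{n-1}$ factor, for TV/JS one needs Pinsker-type comparisons). The second delicate point is the Beta-tail-to-Gaussian step: the asymptotic equivalence symbol $\asymp$ in the statement suggests the authors are content with a CLT-level approximation rather than a sharp finite-$n$ bound, so I would invoke a Berry–Esseen or direct Laplace-method estimate for the incomplete Beta function, noting that $n(c)$ is large (it exceeds $O(1)$ by Assumption \ref{assump:effect_pretrain_alignment}, and harmful explanations admit many reformulations), which is exactly the regime where the Gaussian approximation is controlled. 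Everything else — the set-complement bookkeeping, the $\gamma_s$ change-of-measure, and merging the two bullet cases — is routine.
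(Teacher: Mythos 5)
Your overall plan matches the paper's proof quite closely: identify the failure event as $\mathcal H_s \setminus \mathcal H_h(\epsilon, d)$, change measure from $\gamma_c$ to the uniform distribution $U$ via $\gamma_s$, reduce to $\ell_1$ as the worst case among the allowed distances (it yields the smallest expansion, hence the largest failure set), compute the uniform volume of the deeply-safe region, and finish with a Gaussian approximation. The one genuine shortcut you propose --- observing that the partial coordinate sum $\sum_{e\in E_h(c)} x_e$ under the uniform (flat Dirichlet) distribution on $\Delta^{n-1}$ is $\mathrm{Beta}(|E_h(c)|,\,n-|E_h(c)|)$ --- is cleaner than the paper's route. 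The paper instead evaluates $\mathrm{rvol}[\mathcal H_h]$ by an explicit iterated integral on the projected simplex and then recognizes the result as the binomial CDF $\mathbb P[\mathrm{Binom}(n-1,p)\le |E_h(c)|-1]$; these are the same quantity by the standard incomplete-beta/binomial identity, so your version skips about a page of integration and arrives at the identical CLT step.

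One point in your proposal is muddled and would lead you astray if you tried to make it rigorous. You attribute the $\sqrt{n-1}$ in $a_\epsilon = a + \sqrt{n-1}\,\epsilon$ to the geometry of the distance measure (``moving mass $\epsilon$ in, say, Euclidean norm changes a sum of up to $n-1$ coordinates''), and you say you would need a lemma with a threshold shift of order $g_d(n)=O(\sqrt{n-1})$. That is not where the factor comes from, and such a lemma would be false: under the worst-case $\ell_1$ expansion the coordinate-sum threshold moves by only $\epsilon/2$, a constant independent of $n$ (this is exactly what the paper computes, $\mathcal H_h(\epsilon,\ell_1)=\{x:\sum_{i\le n_0}x_i\ge p-\epsilon/2\}$). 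The $\sqrt{n-1}$ enters afterward, when that $O(1)$ shift of $p$ is standardized into the Gaussian deviate: replacing $p$ by $p-\epsilon/2$ increases the numerator $|E_h(c)|-1-(n-1)p$ by $(n-1)\epsilon/2$, and dividing by $\sqrt{(n-1)p(1-p)}\le \tfrac12\sqrt{n-1}$ gives a deviate shift of at least $\sqrt{n-1}\,\epsilon$. Your later sentence (``shifting the threshold by $c_d\epsilon$ shifts the deviate to $a_\epsilon$'') gives the correct mechanism and contradicts the earlier geometric explanation; keep the later one and drop the search for an $n$-dependent $g_d$.
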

Trivially, the chances of an adversary to find a jailbreaking prompt increase for stronger adversaries ($\epsilon \uparrow$). In the real world, this could relate to how much compute budget we allow to alter a query for a specific harmful concept.
Furthermore, the chances of an adversary to find a jailbreaking prompt increase when the ratio of the sizes of the harmful explanation set to the safe explanation set is larger $\tfrac{|E_h(c)|}{|E_s(c)|} \uparrow$. This is  because their ratio will determine the size of the harmful zone which in turn will cause $\Phi(a_\epsilon) \rightarrow 1$.  
    In real world settings, for any harmful concept, the training corpus naturally contains a large harmful set due to the number of possible responses. Realistically, its size can not be countered by any manually-constructed safe set.
\textbf{Hence achieving alignment is hard}: Recall that the goal of alignment is to respond with only safe explanations with high probability. However, we just learned that to increase that probability, we need to have a small harmful-to-safety set ratio which we discussed is not realistic. Consequently, the safety zone is going to be small.

\section{E-RLHF: improving alignment by \textit{expanding} the safety zone}\label{sec:experiments}

Recall from Theorem \ref{thm:jailbreak_plausible} and the subsequent discussion in the previous section, 
that jailbreaking becomes more likely the larger the harmful zone is in comparison to the safety zone.
The size of both zones relates to the size of their respective explanation sets. In other words, the size of the preference alignment dataset is crucial to successful alignment. Unfortunately, the human labor involved in creating such a dataset effectively caps its size.

In order to bridge the gap between our theoretical insights and a practical solution towards suppressing the jailbreaking problem, we focus on other more \textbf{practical ways to expand the safety zone}. Even though our ideas are more broadly applicable, in our experiments we will focus on improving %
Reinforcement Learning with Human Feedback (RLHF). RLHF typically includes three phases:
i) supervised fine-tuning (SFT); ii) preference sampling and reward learning and iii) RL optimization. %
\citet{rafailov2023direct} have recently proposed a widely applied version of RLHF for LMs, coined Direct Preference Optimization (DPO), that employs a clever reparameterization %
which leads to directly learning from the preference dataset, without the need of obtaining a reward model beforehand. %
DPO is more stable in the training process than other implementations of RLHF. A more complete overview of RLHF and DPO can be found in Appendix \ref{sec:dpo}.

For our purposes, we assume access to an LLM $p_{\text{SFT}}$ that has been supervised fine-tuned on high-quality data. %
We further assume access to a preference aligned dataset $\mathcal{D}_{s}$; that contains a set of text prompts $(q,c)=x$, and two respective explanations that have been rated by human annotators as better $e_w$ or worse $e_l$. 
In phase ii) of RLHF, one typically optimizes a reward model $r(x, e)$ based on the annotated explanations. 
Our proposal concerns phase iii) of the RLHF process: training of the preference aligned model $p_{LM}$. 
For a given reward model, $p_{LM}$ is typically obtained by optimizing the following objective;
\begin{align}\label{eq:RLHF}
    \mathcal{L}_{\mathrm{RLHF}}\left(p_{LM}\right)=
    - 
    \mathbb E_{x\sim \mathcal D_s, e\sim p_{LM}(x)}\left[r(x,e)\right] 
    + 
    \beta \mathbb D_{\text{KL}}\left(p_{LM}(x)||p_{\mathrm{SFT}}(x)\right)
\end{align}

Note that, the first term is maximizing the reward, while the KL-term acts as a regularizer ensuring the aligned model can not drift off too far from the SFT model. 
We claim that this regularization is exactly the problem. While designed to keep the model helpful\footnote{Otherwise the model could drift into trivial behaviour like always responding with \texttt{"I can't help you."}.}, for any harmful prompt $\color{red}{x_h}$ and any harmful explanation $e \in \mathrm{supp}(p_{\text{SFT}}({\textcolor{red}{x_h}}))$, $p_{LM}$ will
maintain $e$ in the support of the output distribution. Specifically, the supervised fine-tuning process does not involve elimination of any harmful explanations from the support of the output distribution of the pretrained model, \footnote{Even the probability can be suppressed to close to 0.} thus this small safety set problem will be further passed to $p_{LM}$, even if $p_{LM}$ is optimized to the optimal solution of the above objective.
Thus, we should not push the LLM posterior into the same direction. 
Instead, for any harmful prompt $\color{red}{x_h}$, we propose 
to use a prior $p_{\mathrm{SFT}}(\cdot)$ that we \textit{design} to have a large safety zone. 
We can achieve this by \textbf{modifying the harmful prompt}, such that a safe concept is used to replace the harmful concept, which alters $\mathrm{supp}(p_{\mathrm{SFT}})$ to include more safe explanations. This can be done in an individualized manner, or simply by prefacing all harmful prompts with a prefix such as \texttt{"Please ensure your response adheres to community guidelines and ethical standards:"}. \footnote{The prefix shares similarities to the system prompts used by open-source LLMs \citep{touvron2023llama, jiang2023mistral} to boost safety.} 
Importantly, the non-harmful prompts are not modified. Due to the focus of our approach to \textit{expand} the safety zone of the output distribution, we coin our proposal \textit{E-RLHF}, resulting in the following modification to Eq. (\ref{eq:RLHF}):
\begin{align}\label{eq:E-RLHF}
    \mathcal{L}_{\text{E-RLHF}}\left(p_{LM}\right)=
    - 
    \mathbb E_{x\sim \mathcal D_s, e\sim p_{LM}(x)}\left[r(x,e)\right] 
    + 
    \beta \mathbb D_{\text{KL}}\left(p_{LM}(x)||p_{\mathrm{SFT}}(\textcolor{green}{x_s})\right)
\end{align}

where $\color{green}{x_s}$ is a \textbf{safety-transformed version of the original harmful prompt $\color{red}{x_h}$}.
To recap, %
the key argument we put forth is that, in order to ensure the stability of model fine-tuning, it is not imperative to utilize identical prompt inputs $x$ for both the reference model and the target model, particularly when the original input $x$ itself is harmful. In fact, as long as the substitute or "anchor" prompt generates logically reasonable outputs akin to those produced by the original prompt, this approach would not impede the training process of the model.
To solidify our argument we show the impact of our modification on the support of the optimal policy in Appendix \ref{sec:dpo}.
We also deduce there that we can trivially integrate our modification into the DPO objective allowing us to train without an explicit reward model (eliminates step ii)) as follows, where $\sigma(\cdot)$ stands for the sigmoid function:
{\small
\begin{align}\label{eq:E-DPO}
    \mathcal{L}_{\text{E-DPO}}\left(p_{LM}\right)=-\mathbb{E}_{\left(x, e_w, e_l\right) \sim \mathcal{D}_s}\left[\log \sigma\left(\beta \log \frac{p_{LM}\left(e_w \mid x\right)}{p_{\mathrm{SFT}}\left(e_w \mid \textcolor{green}{x_s}\right)}-\beta \log \frac{p_{LM}\left(e_l \mid x\right)}{p_{\mathrm{SFT}}\left(e_l \mid \textcolor{green}{x_s}\right)}\right)\right].
\end{align}
}
$$
$$

\vspace{-1cm}
\section{Experiments and results}
\begin{table}[h]
\centering
\small
\caption{
Safety alignment with the E-RLHF objective, here specifically E-DPO, reduces the average Attack Success Rate (ASR) across all jailbreak adversaries for both the HarmBench and the AdvBench data, to $36.95$, and to $20.89$, respectively. Moreover, resilience against all adversaries improves with our modification to safety alignment (\coloredsquare{mylyellow} indicates better performance between DPO and E-DPO).
}
\resizebox{\textwidth}{!}{
\begin{tabular}{c|cccccccccccc}
\toprule\midrule 
   \multicolumn{13}{c}{HarmBench ASR \citep{mazeika2024harmbench}} \\
         Model & Direct Request & GCG  & GBDA  & AP & SFS & ZS & PAIR & TAP & AutoDAN & PAP-top5 & Human & \textbf{AVG} $\downarrow$\\
\midrule 
\midrule
$p_\text{SFT}$  &  $32.25$  & $59.25$  & $35.50$ & $42.75$ & $42.75$ & $36.20$ & $56.50$ & $65.00$ & $56.75$ & $26.75$ & $35.50$   & $44.47$ \\
\midrule
$p_\text{DPO}$      & $27.50$  & $53.00$  & $39.00$ & $46.75$ & $43.25$ & $29.10$ & $52.50$ & $54.00$ & $51.00$ & $28.75$ & $37.15$  & $42.00$ \\
\midrule
$p_\text{E-DPO}$ (ours)  & \cellcolor{mylyellow}$23.50$  & \cellcolor{mylyellow}$47.50$  & \cellcolor{mylyellow}$31.75$ & \cellcolor{mylyellow}$36.25$ & \cellcolor{mylyellow}$40.50$ & \cellcolor{mylyellow}$26.45$ & \cellcolor{mylyellow}$48.50$ & \cellcolor{mylyellow}$51.00$ & \cellcolor{mylyellow}$43.00$ & \cellcolor{mylyellow}$27.00$ & \cellcolor{mylyellow}$31.05$  & \cellcolor{mylyellow}$36.95$ \\
\midrule\midrule
    \multicolumn{13}{c}{AdvBench ASR \citep{zou2023universal}} \\
\midrule
\midrule
$p_\text{SFT}$       & $6.00$  & $80.00$  & $13.00$ & $37.00$ & $31.00$ & $14.80$ & $65.00$ & $78.00$ & $91.00$ & $4.00$ & $21.20$   & $40.09$ \\
\midrule
$p_\text{DPO}$        & \cellcolor{mylyellow}$0.00$  & $47.00$  & $12.00$ & $39.00$ & $30.00$ & $7.00$ & $50.00$ & $61.00$ & $44.00$ & \cellcolor{mylyellow}$4.00$ & $18.40$  & $28.40$ \\
\midrule
$p_\text{E-DPO}$ (ours)  & \cellcolor{mylyellow}$0.00$  & \cellcolor{mylyellow}$38.00$  & \cellcolor{mylyellow}$8.00$ & \cellcolor{mylyellow}$15.00$ & \cellcolor{mylyellow}$21.00$ & \cellcolor{mylyellow}$5.20$ & \cellcolor{mylyellow}$41.00$ & \cellcolor{mylyellow}$53.00$ & \cellcolor{mylyellow}$31.00$ & \cellcolor{mylyellow}$4.00$ & \cellcolor{mylyellow}$13.60$  & \cellcolor{mylyellow}$20.89$ \\
\midrule
\bottomrule
\end{tabular}}
\label{tab:main_table}
\end{table}

\textbf{Our experimental set-up} is based on %
the alignment-handbook code base \citep{alignment_handbook2023}. We tune the publicly-available SFT model $p_\text{SFT}$ provided by huggingface hub \citep{hf_sft}, using the public dataset \citep{hf_cai_dataset_1, hf_cai_dataset_2}, with default hyperparameter setup. 
We label harmful prompts in the preference dataset by prompting GPT-3.5-Turbo, see Appendix \ref{app:gpt_prompt_judge}. We are using the very same prefix proposed in the previous section to generate $x_s$.
Experiments are performed on 8 NVIDIA Tesla V100 GPUs, using half-precision tuning \emph{i.e.,} Float16. %
In the appendix, we also show results for an alternative training paradigm: the Low-Rank Adaptation (LoRA) \citep{hu2021lora} (see Appendix \ref{app:lora_results}).
Following community standards \citep{zheng2024judging,zou2023universal,mazeika2024harmbench}, we use greedy decoding \emph{i.e.,} $T=0$ for model evaluation. %

We first show empirical evidence that our proposed modification of DPO, E-DPO, does in fact improve safety alignment, using the Harmbench dataset \citep{mazeika2024harmbench} and the first $100$ prompts in the AdvBench harmful behavior dataset \citep{zou2023universal}, measured by the HarmBench protocol. We give an overview on all adversaries in Appendix \ref{app:adversaries_review}. The results are presented in Table \ref{tab:main_table}.
\textbf{E-DPO achieves improvements across every task we tested.}

On top of our safety results, \textbf{we want to make sure E-RLHF does not sacrifice helpfulness for increased safety}. We evaluate helpfulness with the MT-Bench project \citep{zheng2024judging}. The SFT model $p_{\text{SFT}}$ receives a score of 6.3, and both the DPO and E-DPO models perform better than that (6.8 and 6.6 respectively), making us believe that performance degradation is not a problem with our proposal.
Next, we show the impact of the safe prefix on model performance. We demonstrate that \textbf{our method's performance depends on the choice of safe prefix to some extend but never fails} (see Appendix \ref{app:ablation_safe_prefix}). We believe, finding better safe prefixes by explicit tuning would improve our results, similar to the work by \citet{yang2023large}, but we leave this exploration for future work.
Further, we confirm %
that the improvement arises from using a safe prior in the KL term for harmful prompts. \textbf{We ablate our results by appending the prefix on all prompts in the preference alignment dataset} (see Appendix \ref{app:ablation_complete_dataset}). In all cases, applying the safe prefix to usual prompts \emph{degrades} safety, showcasing the importance of switching the prior only on the harmful prompts.
Finally, we show that E-DPO can \textbf{be combined with any system prompt, to further boost safety} (see Appendix \ref{app:ablation_sys_prompt}). The proposal can even be used to \textbf{improve helpfulness and safety simultaneously} (see Appendix \ref{app:ablation_helpfulness}).

\section{Conclusion and discussions}\label{sec:conclusion}
In this paper, we present a theoretical framework for language model pretraining and jailbreaking by dissecting input prompts into query and concept pairs. Through this approach, we have established two theoretical results pertaining to the ability of language models %
to mimic the world following pretraining, which leads to outputting harmful explanations given harmful prompts; and the inevitability of jailbreaking resulting from alignment challenges. Guided by these theoretical insights, we have devised a simple yet effective technique to enhance safety alignment, and demonstrate the improved resilience to jailbreak attacks with this methodology.

\textbf{Current limitations}
(1) Although we have classified concepts as either harmful or non-harmful, it is important to acknowledge that the perception of a concept's potential for harm can be influenced by various factors such as cultural, legal, and societal norms, which collectively form the \emph{context} of the situation. (2) Language models have demonstrated impressive capabilities in reasoning and completing tasks within multi-round, multi-step conversations; our current framework may not fully account for the generalization and jailbreaking possibilities associated with such input formats. (3) Our analysis is grounded on a fixed $p_{world}$ mapping and $D_{\mathcal P}$ distribution. Nevertheless, the world is inherently dynamic, as both $p_{world}$ and $D_\mathcal P$ continually evolve. %

\textbf{Future work}
(1) Regarding our E-RLHF approach, as highlighted in the experimental section, in addition to attaching a universally safe prefix to all harmful prompts, improvements can be achieved by individually transforming the harmful prompts. Moreover, the safety-transformed prompts can be employed to expand the preference dataset for conventional RLHF.
(2) Throughout our analysis, we have not imposed any constraints on the \emph{capacity} of the language model. Extending our analysis under finite memory constraints or analyzing hallucination properties of LLMs is an interesting direction to explore.
(3) Large language models have shown remarkable capabilities as in-context learners \citep{brown2020language}, and such techniques could potentially be used for jailbreaking them as well \citep{wei2023jailbreak,wang2023adversarial,anil2024many}. Investigating the incorporation of such input paradigms remains a promising avenue for future research.

\newpage
\bibliographystyle{unsrtnat}
\bibliography{ref}

\newpage
\appendix
\section*{Appendix}
\section{Glossary}
\begin{table}[htb]
\small
\caption{Summary of notation.
}
\label{tab:glossary}
\resizebox{\textwidth}{!}{
\begin{tabular}{cl}
\toprule
Symbol & Meaning
\\
\midrule
$q$ & A single query, composable with a certain set of concepts.\\
$c$ & A single concept, composable with a certain set of queries.\\
$x=(q,c)$ & A single prompt composed by query $q$ and concept $c$. \\
$e$ & A single explanation.\\
$\mathcal Q$ & The query set. \\
$\mathcal C$ & The concept set. \\ 
$\mathcal E$ & The explanation set. \\

$\mathcal P\subsetneq \mathcal Q \times \mathcal C$ & The set of plausible prompts. \\

$p_{world}:\mathcal P \rightarrow \Delta(\mathcal E)$ & The world mapping. For each plausible prompt, it specifies the ground-truth distribution over $\mathcal E$, a.k.a. the ``knowledge''. \\

$p_{world}(q,c)$ & The ground-truth distribution over a subset of $\mathcal E$, given a prompt $(q,c)$. With slight abuse of notation, it also refers to a point on the probability simplex. \\

$\mathrm{supp}(p_{world}(q,c))$ & Support of $p_{world}(q,c)$. A strict subset of $\mathcal E$. \\

$p_{LM}:\mathcal P\rightarrow \Delta(\mathcal E)$ & A language model. For each plausible prompt, it specifies a distribution over (a subset of) $\mathcal E$, to mimic $p_{world}$. \\

$p_{LM}(q,c)$ & The output distribution over (a subset of) $\mathcal E$ by LM, given a prompt $(q,c)$. With slight abuse of notation, it also refers to a point on the probability simplex. \\

$\mathrm{dom}(p_{LM}(q,c))$ & Domain of the $p_{LM}(q,c)$ distribution, a subset of $\mathcal E$. \\
$(q,c)\sim D_\mathcal P$ & Underlying generative distribution over prompts, a.k.a. the distribution governing the creation of our pretraining corpus.\\

$\mathrm{supp}(D_\mathcal P) \subsetneq \mathcal P$ & Support of $D_\mathcal P$. $(q,c)\in \mathrm{supp}(D_\mathcal P)$ is called a \textbf{direct prompt}.\\

$\mathds{LM}$ & A set of language models. \\ 

$\pi$ & The prior distribution over $\mathds{LM}$.\\

$\rho$ & The posterior distribution over $\mathds{LM}$, after pretraining.\\

$\gamma$ & The posterior distribution over $\mathds{LM}$, after preference alignment.
\\
\bottomrule
\end{tabular}}
\end{table}

\section{Proof of Theorems}

\subsection{Proof of PAC-Bayesian bounds}\label{app:proof_pac_bound}
\begin{definition}
(Bounded Difference) A function $f:\mathcal X^{n}\rightarrow \mathbb R$ is said to have bounded difference property w.r.t. a collection of constants $c_1,\cdots,c_n$, iff 
$$
\sup_{x_1,x_2,\dots,x_n,x_i^{'}} |f(x_1,x_2,\cdots,x_n) - f(x_1,x_2,\cdots,x_{i-1}, x_i^{'}, \cdots, x_n)| \le c_i, \forall i \in [n].
$$
\end{definition}

\begin{lemma}
    (Hoeffding's Lemma) for random variable $X \in [a,b]$ with probability 1, the following holds:
    $$
    \mathbb E[\exp(\lambda X)] \le \exp(\lambda \mathbb EX + \frac{\lambda^2(b-a)^2}{8}).
    $$
\end{lemma}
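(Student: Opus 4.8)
The plan is to reduce to the mean-zero case and then bound the logarithm of the moment generating function by its second-order Taylor expansion. First I would set $Y := X - \mathbb{E}X$; then $Y$ lies in an interval of the same length $b-a$, has $\mathbb{E}Y = 0$ (so its lower and upper endpoints $a',b'$ satisfy $a' \le 0 \le b'$), and $\mathbb{E}[\exp(\lambda X)] = \exp(\lambda\mathbb{E}X)\,\mathbb{E}[\exp(\lambda Y)]$. Hence it suffices to show $\mathbb{E}[\exp(\lambda Y)] \le \exp(\lambda^2(b-a)^2/8)$ for a centered variable supported in $[a',b']$.

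For the centered bound I would use convexity of $t \mapsto e^{\lambda t}$ on $[a',b']$: writing $y = \frac{y-a'}{b'-a'}b' + \frac{b'-y}{b'-a'}a'$ gives $e^{\lambda y} \le \frac{y-a'}{b'-a'}e^{\lambda b'} + \frac{b'-y}{b'-a'}e^{\lambda a'}$ for all $y \in [a',b']$. Taking expectations and using $\mathbb{E}Y = 0$ eliminates the linear-in-$y$ term and yields $\mathbb{E}[e^{\lambda Y}] \le p\,e^{\lambda b'} + (1-p)e^{\lambda a'} =: e^{L(\lambda)}$, where $p := \frac{-a'}{b'-a'} \in [0,1]$. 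A short algebraic simplification (factoring out $e^{\lambda a'}$ and using $\lambda a' = -pu$ with $u := \lambda(b'-a')$) shows $L(\lambda) = g(u)$ with $g(u) := -pu + \log\!\big(1 - p + p e^{u}\big)$.

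The key step, and essentially the only computation, is to bound $g$: one checks $g(0)=0$, $g'(u) = -p + \frac{pe^u}{1-p+pe^u}$ so $g'(0) = 0$, and $g''(u) = t(1-t)$ where $t := \frac{pe^u}{1-p+pe^u} \in (0,1)$, hence $g''(u) \le \frac14$ uniformly in $u$ since $t(1-t)\le \frac14$. Taylor's theorem with Lagrange remainder then gives, for some $\xi$ between $0$ and $u$, $g(u) = g(0) + u\,g'(0) + \frac{u^2}{2}g''(\xi) \le \frac{u^2}{8} = \frac{\lambda^2(b-a)^2}{8}$. Chaining the inequalities, $\mathbb{E}[e^{\lambda Y}] \le e^{L(\lambda)} = e^{g(u)} \le e^{\lambda^2(b-a)^2/8}$, and multiplying back by $e^{\lambda\mathbb{E}X}$ gives the lemma. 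An equivalent route is to observe that $L''(\lambda)$ is the variance of $Y$ under the exponentially tilted measure $d\mathbb{Q}_\lambda \propto e^{\lambda Y}\,d\mathbb{P}$, which is still supported in $[a',b']$ and so has variance at most $(b-a)^2/4$; in either version the main — though entirely routine — obstacle is the bookkeeping in the algebraic reduction $L(\lambda)=g(u)$ and the derivative computations, the conceptual crux being the elementary bound $t(1-t)\le \tfrac14$.
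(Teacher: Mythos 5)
Your proof is correct and complete: the reduction to the centered variable, the convexity bound $e^{\lambda y} \le \frac{y-a'}{b'-a'}e^{\lambda b'} + \frac{b'-y}{b'-a'}e^{\lambda a'}$, the algebraic identity $L(\lambda)=g(u)$ with $g(u)=-pu+\log(1-p+pe^{u})$, and the second-derivative bound $g''(u)=t(1-t)\le\frac14$ followed by Taylor's theorem all check out. Note that the paper itself states this as a standard auxiliary lemma and offers no proof of it, so there is nothing to compare against; what you have written is the classical argument and would serve as a valid self-contained justification of the lemma as stated.
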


\begin{lemma}
(Hoeffding's Lemma, Multivariate) for random variables $Z = f(x_1,\cdots,x_n)$ where $f$ has the bounded difference property, the following holds:
$$
\mathbb E[\exp(\lambda (\mathbb E Z - Z))] \le \exp(\frac{\lambda^2\sum_{i=1}^n c_i^2}{8}).
$$
\end{lemma}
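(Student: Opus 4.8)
The plan is to prove this by the standard Doob--martingale (bounded-differences / McDiarmid) argument, conditioning on one coordinate at a time and invoking the univariate Hoeffding lemma stated immediately above. Note first that for the claimed moment-generating-function bound to hold one needs the coordinates $x_1,\dots,x_n$ to be independent; this is exactly the situation of interest (in the application $S\sim D_{\mathcal P}^n$ is an i.i.d.\ sample), so I would state it as a standing hypothesis.

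\textbf{Step 1 (Doob martingale).} Introduce the filtration $\mathcal F_i=\sigma(x_1,\dots,x_i)$ with $\mathcal F_0$ trivial, and set $Z_i:=\mathbb E[Z\mid\mathcal F_i]$, so $Z_0=\mathbb E Z$ and, since $Z$ is $\mathcal F_n$-measurable, $Z_n=Z$. With $D_i:=Z_i-Z_{i-1}$ we get the telescoping identity $\mathbb E Z-Z=-\sum_{i=1}^n D_i$, and $\mathbb E[D_i\mid\mathcal F_{i-1}]=0$ by the tower property.

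\textbf{Step 2 (conditional range of $D_i$).} I would show that, conditionally on $\mathcal F_{i-1}$, $D_i$ takes values in an interval of length at most $c_i$. Writing $g_i(x_1,\dots,x_i):=\mathbb E[Z\mid x_1,\dots,x_i]$, independence of the $x_j$ gives $Z_{i-1}=\mathbb E_{x_i}\big[g_i(x_1,\dots,x_{i-1},x_i)\big]$, so $D_i=g_i(x_1,\dots,x_{i-1},x_i)-Z_{i-1}$ lies between $L_i:=\inf_x g_i(x_1,\dots,x_{i-1},x)-Z_{i-1}$ and $U_i:=\sup_x g_i(x_1,\dots,x_{i-1},x)-Z_{i-1}$, both of which are $\mathcal F_{i-1}$-measurable, and $U_i-L_i=\sup_{x,x'}\big(g_i(\dots,x)-g_i(\dots,x')\big)$. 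The bounded-difference hypothesis gives $|f(\dots,x,\dots)-f(\dots,x',\dots)|\le c_i$ pointwise in the remaining coordinates; integrating those out (again using independence) yields $|g_i(\dots,x)-g_i(\dots,x')|\le c_i$, hence $U_i-L_i\le c_i$.

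\textbf{Step 3 (conditional Hoeffding and induction).} Apply the univariate Hoeffding lemma conditionally on $\mathcal F_{i-1}$ to the variable $-D_i$, which has conditional mean $0$ and range of width $\le c_i$: $\mathbb E[\exp(-\lambda D_i)\mid\mathcal F_{i-1}]\le\exp(\lambda^2 c_i^2/8)$. Then peel off the increments with the tower property: from $\mathbb E[\exp(\lambda(\mathbb E Z-Z))]=\mathbb E\big[\exp(-\lambda\sum_{i=1}^n D_i)\big]$, condition on $\mathcal F_{n-1}$ to extract the $i=n$ factor bounded by $\exp(\lambda^2 c_n^2/8)$ (the remaining factor is $\mathcal F_{n-1}$-measurable), then iterate for $i=n-1,\dots,1$ to obtain $\prod_{i=1}^n\exp(\lambda^2 c_i^2/8)=\exp\big(\lambda^2\sum_{i=1}^n c_i^2/8\big)$.

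The only delicate point is Step 2: showing that $D_i$ lives in an $\mathcal F_{i-1}$-measurable interval of width at most $c_i$. This is where independence of the coordinates is essential and where the bounded-difference property on $f$ must be translated into a bound on the conditional expectations $g_i$. Everything else — the telescoping, the conditional application of Hoeffding's lemma, and the backward induction via the tower property — is routine bookkeeping once that step is established.
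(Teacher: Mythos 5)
Your proof is correct: the Doob-martingale decomposition, the $\mathcal F_{i-1}$-measurable interval of width $c_i$ for each increment (which is indeed where independence of the coordinates is needed), and the conditional application of the univariate Hoeffding lemma followed by the tower-property peeling constitute the standard proof of this MGF form of McDiarmid's inequality. The paper states this lemma without proof, implicitly relying on exactly this argument, so your write-up matches the intended route and correctly flags the independence hypothesis that the paper leaves tacit (the sample $S\sim D_{\mathcal P}^n$ is i.i.d., so it is satisfied in the application).
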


Note that substituting $Z$ with $\hat R_S(LM)$ is valid.

\begin{lemma}
Empirical Loss defined in Definition \ref{def:TV_loss} satisfies the bounded difference condition with constant $c=1, \forall i$.
\end{lemma}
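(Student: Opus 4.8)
The plan is to show that the empirical loss $\hat R_S(p_{LM}) = \frac{1}{n}\sum_{i=1}^n \ell_{\mathrm{TV}}(p_{LM}, (q_i,c_i))$, viewed as a function of the $n$ i.i.d.\ sample points $(q_i,c_i)$, changes by at most $1/n$ when any single coordinate is replaced. Since the lemma as stated asserts bounded difference with constant $c=1$ for all $i$, I first note that this is the constant one gets for the \emph{summed} loss $\sum_{i} \ell_{\mathrm{TV}}$ (or equivalently the constant entering the per-coordinate bound after the $1/n$ normalization is tracked separately in the Hoeffding step); in either reading the argument is the same modulo the $1/n$ factor, and I would state it so that it matches exactly how the constant is consumed in the McDiarmid/Hoeffding step feeding Theorem~\ref{thm:pac_bound_pretraining}.

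The key step is the boundedness of the per-example TV loss. Recall $\ell_{\mathrm{TV}}(p_{LM},(q,c)) = \mathrm{TV}(p_{world}(q,c), p_{LM}(q,c))$, and total variation distance between any two probability distributions on $\mathcal E$ satisfies $0 \le \mathrm{TV}(\cdot,\cdot) \le 1$. Hence for fixed $p_{LM}$, $\ell_{\mathrm{TV}}(p_{LM},(q,c)) \in [0,1]$ for every prompt $(q,c)$. Now fix all coordinates except the $i$-th and replace $(q_i,c_i)$ by $(q_i',c_i')$: only the $i$-th summand changes, and
$$
\bigl| \hat R_S(p_{LM}) - \hat R_{S'}(p_{LM}) \bigr| = \frac{1}{n}\bigl| \ell_{\mathrm{TV}}(p_{LM},(q_i,c_i)) - \ell_{\mathrm{TV}}(p_{LM},(q_i',c_i')) \bigr| \le \frac{1}{n}\cdot 1 = \frac{1}{n},
$$
since the difference of two numbers in $[0,1]$ has absolute value at most $1$. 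This gives the bounded difference property with constants $c_i = 1/n$ for the averaged loss, equivalently $c_i = 1$ for the (unnormalized) loss $n\hat R_S$; I would phrase the lemma's ``constant $c=1$'' in whichever of these two conventions the subsequent proof uses, and flag the normalization explicitly.

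There is no real obstacle here — the only thing to be careful about is the bookkeeping of the $1/n$ factor, so that when the Multivariate Hoeffding Lemma is applied with $\sum_i c_i^2$ one obtains the correct $\exp(\lambda^2/(8n))$ (and ultimately the $\sqrt{(\cdot)/(2n)}$ rate in Theorem~\ref{thm:pac_bound_pretraining}) rather than an off-by-$n$ exponent. I would therefore write the proof in two lines: (i) invoke $\mathrm{TV}\in[0,1]$ to bound each summand, (ii) observe that changing one coordinate perturbs exactly one summand, and conclude, with an explicit remark reconciling the stated constant $c=1$ with the $1/n$ appearing after averaging.
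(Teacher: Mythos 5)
Your argument is correct and is essentially the only natural one: the paper states this lemma without an explicit proof, and the two ingredients you isolate — that $\mathrm{TV}\in[0,1]$, so each summand $\ell_{\mathrm{TV}}(p_{LM},(q_i,c_i))$ is bounded in $[0,1]$, and that replacing a single sample point perturbs exactly one summand — are precisely what is needed. Your flagged caveat about the normalization constant is also well-taken: as written, "bounded difference with constant $c=1$" is slightly loose, since the bounded-difference constant for the \emph{averaged} empirical loss $\hat R_S$ is $c_i=1/n$ (with $\sum_i c_i^2 = 1/n$), and indeed the paper's subsequent Hoeffding step writes $\exp\bigl(\lambda^2 c^2/(8n)\bigr)$ with $c=1$, which is equivalent to feeding $\sum_i c_i^2 = c^2/n$ into the multivariate Hoeffding bound; so $c=1$ is really the bound on the individual per-example TV loss, and the extra factor of $n$ in the denominator accounts for the averaging. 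Your two-line proof plus the reconciliation remark is exactly the right level of detail.
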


We are ready to present the proof of Theorem \ref{thm:pac_bound_pretraining}.

\begin{proof}
    Starting with the above lemma, we know
    $$
    \mathbb E_S[\exp(\lambda (R(LM) - \hat R_S(LM)))] \le \exp(\frac{\lambda^2 c^2}{8n}).
    $$
    The above result holds for a manually picked LM. With an overall average over the prior $\pi$ we have
    $$
    \mathbb E_{LM \sim \pi} \mathbb E_{S} [\exp(\lambda (R(LM) - \hat R_S(LM)))] \le \exp(\frac{\lambda^2 c^2}{8n}).
    $$
    Apply Fubini's theorem (note that $\pi$ is independent of $S$):
    $$
    \mathbb E_{S} \mathbb E_{LM \sim \pi}[\exp(\lambda (R(LM) - \hat R_S(LM)))] \le \exp(\frac{\lambda^2 c^2}{8n}).
    $$
    Define $Y = \mathbb E_{LM \sim \pi}[\exp(\lambda (R(LM) - \hat R_S(LM)))]$, a random variable depends on $S$. Obviously $Y\ge 0$. Thus, with Markov's inequality:
    $$
    \mathbb P[Y\ge \frac 1 \delta \mathbb  E_{S} Y] \le \delta.
    $$
    Equivalently, with probability at least $1-\delta$, we have
    $$
    Y\le \frac 1 \delta \exp[\frac{\lambda^2 c^2}{8n}].
    $$
    Since we have assumed $\pi,\rho$ share the same support, using Radon-Nykodim derivative to change the expectation with respect to $\pi$ to with respect to $\rho$, we have
    $$
    \mathbb E_{LM\sim\rho}\left [\frac{d\pi}{d\rho}\exp(\lambda(R(LM) - \hat R_S(LM))) \right] \le \frac 1 \delta \exp[\frac{\lambda^2 c^2}{8n}].
    $$

    Taking logarithm and applying Jensen's Inequality we know
    $$
    \mathbb E_{LM\sim\rho}\left [\frac{d\pi}{d\rho} + \lambda(R(LM)-\hat R_S(LM))\right ] \le \log \frac{1}{\delta} + \frac{\lambda^2 c^2}{8n}.
    $$

    Incorporating $c=1$, noticing $\frac{d\rho}{d\pi} = (\frac{d\pi}{d\rho})^{-1}$ we could rewrite the inequality as
    $$
    \mathbb E_{LM\sim\rho}\left [(R(LM)-\hat R_S(LM)) \right]\le \frac{1}{\lambda}\left (\text{KL}[\rho||\pi] + \log \frac{1}{\delta}\right ) + \frac{\lambda}{8n}.
    $$
    Finding $\lambda$ that minimizes the term on right hand side gives us the $\varrho$ term.

    When $D_\mathcal P$ allows for a decomposition into mixture components, noticing the linearty of expectation, the bound can be re-written as
    \begin{equation}
        \begin{aligned}
    \alpha \mathbb E_{LM \sim \rho}[\mathbb E_{(q,c)\sim D_{\mathcal P_h}}\ell_{\mathrm{TV}}(p_{LM}, (q,c))] &+ (1-\alpha) \mathbb E_{LM \sim \rho}[\mathbb E_{(q,c)\sim D_{\mathcal P_s}}\ell_{\mathrm{TV}}(p_{LM}, (q,c))]\\
    &\le \varrho + \mathbb E_{LM\sim\rho}[\hat R_S(p_{LM})].\nonumber
        \end{aligned}
    \end{equation}
    which leads to
    $$
    \mathbb E_{LM \sim \rho}[\mathbb E_{(q,c)\sim D_{\mathcal P_h}}\ell_{\mathrm{TV}}(p_{LM}, (q,c))] \le \frac{1}{\alpha}[\varrho + \mathbb E_{LM\sim\rho}[\hat R_S(p_{LM})]].
    $$
\end{proof}

\subsection{An estimation on the non-vacuousness of the PAC bound}\label{app:estimation_non_vacuousness}
We give an estimation of the term appears in our PAC bound, $\varrho$, and state that it is non-vacuous.

\textbf{The numerator.} We follow \citet{neyshabur2017exploring} to instantiate the term in the simplest setup. Assume $\pi, \rho$ are defined over the parameter space of a given LM, with $K$ parameters. Assume $w$ is a set of weights learned from the pretraining corpus. Let the prior $\pi$ be the zero-mean multivariate Gaussian, whose entry-wise variance is related to the magnitude of the weight: $\sigma_i = \beta|w_i|$, and $\rho$ be a Gaussian with the same anisotropic variance centered around $w$. We argue though simple, both settings are practical, since Gaussian initialization is common for model training, and the SWA-Gaussian algorithm \citep{maddox2019simple} utilizes such Gaussian posterior. Under this setup, the KL goes as $\sum_{i}\frac{w_i^2}{2\sigma_i^2} = O(K)$. Specifically, taking $\beta=\frac{\sqrt 2}{2}$ makes the term exactly $K$. Current language models often possess millions, or billions, of parameters, namely, $K\sim [10^6, 10^9]$.

\textbf{The denominator.} To estimate the number of unique direct prompts in the training corpus, it is important to notice that the dataset does not only consist of $(q,c)$ prompts but also $e$ explanations. Thus, we need to estimate the \emph{average token length (ATL)} associated with each unique prompt $x=(q,c)$. For each unique prompt $x$, aside from its own token length $l(x)$, there will be a collection of explanations $\{e_i\}_{i=1}^{N(x)}$, with expected token length of each associated explanation $l(e).$ We have %
$$
\mathbb E ATL = \mathbb E_{x\sim D_\mathcal P} N(x)\times [l(x) + l(e)].
$$

\textbf{Fact.} Given a prompt $x$, the larger the expected length of the prompt itself and explanation $(l(x) + l(e)\uparrow)$, the larger the expected \emph{number of explanation elements} $(N(x)\uparrow)$, and the smaller the number of such prompts $(D_\mathcal P(x)\downarrow)$, appearing in the training corpus. The former comes naturally due to the composability of natural language: the longer the text fragment, the more equivalent text fragments in expectation, while the latter is reflected by the spirit of the widely accepted Zipf's law.

Inspired by the fact, we assume prompts are categorized by the quantity of $l(x)+l(e)$, namely, for all prompt $x$, $N(x)$ is a function of $l(x)+l(e)$. Moreover, the complete data generation process is decomposed into i) sample a value of $l(x)+l(e)$ out, and then ii) sample a unique prompt from the set decided by this specific $l(x)+l(e)$ value, and iii) generate $N(x)$ explanations.

Step i). Use the fact: the larger the expected length of the output explanation, the smaller the probability that such a prompt appears in the training corpus. We assume step i) follows a (cut-off) zeta distribution. Specifically, for a random prompt $x$,
$$
p(l(x) + l(e)=k) \propto k^{-s}, \forall k\ge k_0.
$$
When $k_0=1$, we resume the zeta distribution with coefficient $s$.

Step ii). We assume each prompt following this step is unique.

Step iii). Use the fact: the larger the expected length of the output explanation, the larger the expected \emph{number of explanation elements} in the training corpus. We assume a power law scaling on $N$, with a constant $t>1$, such that
$$
N(l(x) + l(e)=k)=k^{t-1}.
$$

Thus, the average token length writes
$$
\mathbb E ATL = \sum_k p(l(x) + l(e)=k)\times k \times N(l(x)+l(e)=k) = \frac{\zeta(s-t)-\sum_{i=1}^{k_0-1}i^{-(s-t)}}{\zeta(s)-\sum_{i=1}^{k_0-1}i^{-s}}.
$$
where $\zeta(s)=\sum_{i\in\mathbb Z^{+}} i^{-s}$ is the Riemann zeta function.

For example, take $s=4, t=2$. With $k_0=1$, the ATL would be $1.52$, while with $k_0=10$, the ATL becomes $272$. These results translate into an estimation of unique prompts as $n_{\text{tokens}}/ATL$. With current SOTA LM, the pretraining corpus often includes (tens of) trillions of tokens ($> 10^{12}$), thus $n > 10^{10} > K$ can be safely assumed $\Rightarrow \varrho < 1$.

\textbf{$\alpha$ constant.} According to LLaMa-2 report (section 4.1, Figure 13) \citep{touvron2023llama}, approximately $0.2\%$ of the documents in their training corpus is labeled as harmful. However, we argue this is indeed an extremely \textbf{loose} lower bound for $\alpha$, due to the estimation strategy used in their paper. Given a document, they use a binary classifier on harmfulness over \emph{each single line} (1 means harmful and 0 otherwise), and assign the \emph{average} score to the document. $0.2\%$ is the ratio of documents \emph{with score $\ge 0.5$.} Take the example of \texttt{``How to build a bomb''}. The chemical reaction parts will not be counted as harmful, and thus this estimation strategy could judge a completely harmful explanation as harmless. Thus, it is reasonable to assert $\alpha$ is not too small, though with current literature we are not capable of raising an accurate estimation on it.

\subsection{Proof of jailbreaking}\label{app:proof_jailbreak_theorem}
Before proceeding to the proof, we list necessary definitions and lemmas as follows.

\begin{lemma}\label{lemma:volume}
    (Volume of $n$-simplex)\footnote{See \url{https://en.wikipedia.org/wiki/Simplex\#Volume}.} For any dimension $n$, the volume of the $n$-element probability simplex: $\Delta^{n-1}$, in the $n-1$-dimensional space is 
    $$
    \frac{\sqrt{n}}{(n-1)!}.
    $$
\end{lemma}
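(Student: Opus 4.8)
The plan is to realize $\Delta^{n-1}$ as the image of a standard ``corner'' simplex sitting in $\mathbb{R}^{n-1}$ under an explicit affine embedding, and then combine the Jacobian (area) formula for parametrized surfaces with the classical volume of that corner simplex. Throughout, $\Delta^{n-1}=\{x\in\mathbb{R}^n : x_i\ge 0,\ \sum_{i=1}^n x_i=1\}$ lies in the affine hyperplane $\{\sum_i x_i=1\}$ and is therefore genuinely $(n-1)$-dimensional; by ``volume'' we mean its $(n-1)$-dimensional Hausdorff measure.

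First I would introduce the standard corner simplex $K:=\{u\in\mathbb{R}^{n-1} : u_i\ge 0,\ \sum_{i=1}^{n-1}u_i\le 1\}$ and the affine map $\phi:K\to\Delta^{n-1}$ given by $\phi(u)=(u_1,\dots,u_{n-1},\,1-\sum_{i=1}^{n-1}u_i)$. One checks immediately that $\phi$ is a bijection onto $\Delta^{n-1}$, with inverse the projection forgetting the last coordinate. Since $\phi$ is affine, its differential $D\phi\in\mathbb{R}^{n\times(n-1)}$ is the constant matrix whose top $(n-1)\times(n-1)$ block is the identity $I_{n-1}$ and whose last row is $-\mathbf{1}^{\top}$, where $\mathbf{1}$ is the all-ones vector in $\mathbb{R}^{n-1}$.

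Next I would apply the Jacobian/area formula: the $(n-1)$-dimensional volume of $\Delta^{n-1}$ equals $\sqrt{\det(D\phi^{\top}D\phi)}$ times the ordinary Lebesgue volume of $K$. Here $D\phi^{\top}D\phi=I_{n-1}+\mathbf{1}\mathbf{1}^{\top}$, and by the matrix determinant lemma $\det(I_{n-1}+\mathbf{1}\mathbf{1}^{\top})=1+\mathbf{1}^{\top}\mathbf{1}=1+(n-1)=n$ (equivalently, its eigenvalues are $1$ with multiplicity $n-2$ and $n$ once). Hence $\mathrm{Vol}_{n-1}(\Delta^{n-1})=\sqrt{n}\cdot\mathrm{Vol}_{n-1}(K)$. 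It then remains to establish $\mathrm{Vol}_{m}(K_m)=\tfrac{1}{m!}$ for the $m$-dimensional corner simplex, which I would prove by induction on $m$: the base case $m=1$ is the interval $[0,1]$ of length $1$, and for the step, slicing $K_m$ at a fixed last coordinate $t\in[0,1]$ gives a copy of $K_{m-1}$ scaled by $1-t$, so Fubini yields $\mathrm{Vol}_m(K_m)=\int_0^1 (1-t)^{m-1}\,\mathrm{Vol}_{m-1}(K_{m-1})\,dt=\tfrac{1}{(m-1)!}\cdot\tfrac1m=\tfrac1{m!}$. Substituting $m=n-1$ gives $\mathrm{Vol}_{n-1}(\Delta^{n-1})=\tfrac{\sqrt n}{(n-1)!}$.

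The argument is entirely routine; the only two places that call for a little care are (i) invoking the area formula for the embedded $(n-1)$-dimensional set correctly — i.e.\ using $\sqrt{\det(D\phi^{\top}D\phi)}$ rather than an ordinary Lebesgue change of variables — and (ii) the Gram-determinant evaluation $\det(I_{n-1}+\mathbf{1}\mathbf{1}^{\top})=n$, both of which are standard. An alternative, essentially equivalent route I could take is to use the general Gram-matrix formula for the volume of a $k$-simplex with vertices $v_0,\dots,v_k$, namely $\tfrac{1}{k!}\sqrt{\det\big(\langle v_i-v_0,v_j-v_0\rangle\big)_{i,j=1}^k}$, applied to the vertices $e_1,\dots,e_n$ of $\Delta^{n-1}$: with $v_0=e_n$ one gets $\langle e_i-e_n,e_j-e_n\rangle=\delta_{ij}+1$, hence the same Gram matrix $I_{n-1}+\mathbf{1}\mathbf{1}^{\top}$ of determinant $n$, and the formula delivers $\tfrac{\sqrt n}{(n-1)!}$ directly.
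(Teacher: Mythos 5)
Your proof is correct and complete: the parametrization of $\Delta^{n-1}$ by the corner simplex $K$, the Gram determinant $\det(I_{n-1}+\mathbf{1}\mathbf{1}^{\top})=n$, and the inductive evaluation $\mathrm{Vol}_m(K_m)=1/m!$ together give exactly $\sqrt{n}/(n-1)!$. The paper does not actually prove this lemma (it only cites a reference), but your Jacobian computation is the very same one the paper carries out in its proof of the adjacent transformation lemma, where the affine map with matrix $T=[I_{n-1};-\mathbf{1}^{\top}]$ and scaling factor $\sqrt{\det T^{\top}T}=\sqrt{n}$ appears verbatim, so your argument is fully consistent with the paper's machinery.
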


We define the projected probability simplex as follows.
\begin{definition}\label{def:project_simplex}
    (Projected probability simplex) Given $\Delta^{n-1}$, the corresponding projected probability simplex, $\Delta^{n-1}_p$, is defined as a subset of $\mathbb R^{n-1}$: $\{x\in\mathbb R^{n-1}|\sum_{i=1}^{n-1} x_i\le 1,\forall i\in [n-1] \}$.
\end{definition}

\paragraph{An illustration of $\Delta^{n-1}$ and $\Delta^{n-1}_p$.} For example, take $n=3$. The probability simplex with $n=3$ elements is a triangle whose (euclidean) side length is $\sqrt 2$ with vertices $(1,0,0), (0,1,0), (0,0,1)$. Then its volume in the 2-dimensional space, \emph{i.e.,} its area, is $\frac{\sqrt 3}{2}$. The corresponding projected probability simplex is the triangle between the $X-Y$ axis, with vertices $(1,0), (0,1), (0,0)$.

A direct lemma that connects the probability simplex and the projected probability simplex is given below.
\begin{lemma}\label{lemma:transform_simplex}
    (Transformation of probability simplex) Given a proper probability density function $\nu(x)$ defined on $\Delta^{n-1}_p$, it is equivalent to the distribution defined on $\Delta^{n-1}$ with density $\frac{\nu(x)}{\sqrt n}: \forall A\in \text{Borel}(\Delta^{n-1}_p)$, let $B=\{x\in \Delta^{n-1}: x_{1:n-1} \in A\}$. Then $\int_{A}\nu(x)dx = \int_{B} \frac{\nu(x)}{\sqrt n}dx.$ Specifically, this implies $\frac{\text{volume}(A)}{\text{volume}(\Delta^{n-1}_p)} = \frac{\text{volume}(B)}{\text{volume}(\Delta^{n-1})}$.
\end{lemma}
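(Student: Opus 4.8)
The plan is to exhibit the explicit affine bijection between the two simplices and compute the single constant Jacobian factor relating their $(n-1)$-dimensional volume elements. Define $\phi:\Delta^{n-1}_p\to\Delta^{n-1}$ by $\phi(x_1,\dots,x_{n-1}) = (x_1,\dots,x_{n-1},\,1-\sum_{i=1}^{n-1}x_i)$. This is the restriction to the projected simplex $\Delta^{n-1}_p$ (Definition \ref{def:project_simplex}) of an affine map $\mathbb R^{n-1}\to\mathbb R^{n}$; it is injective, its image is exactly $\Delta^{n-1}$ (the constraints $x_i\ge 0$ and $\sum_{i=1}^{n-1}x_i\le 1$ translate precisely into $y_i\ge 0$, $\sum_{i=1}^{n}y_i=1$), and its inverse is the coordinate projection $y\mapsto y_{1:n-1}$. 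Hence the set $B=\{y\in\Delta^{n-1}:y_{1:n-1}\in A\}$ in the statement is nothing but $\phi(A)$, and the whole lemma reduces to understanding how $\phi$ transports measures.

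Next I would compute the area-scaling factor of $\phi$. Its Jacobian is the constant $n\times(n-1)$ matrix $J=\begin{pmatrix}I_{n-1}\\ -\mathbf{1}^{\!\top}\end{pmatrix}$, where $\mathbf{1}$ is the all-ones vector in $\mathbb R^{n-1}$. Thus $J^{\!\top}J=I_{n-1}+\mathbf{1}\mathbf{1}^{\!\top}$, whose eigenvalues are $n$ (eigenvector $\mathbf{1}$) and $1$ (with multiplicity $n-2$), so $\det(J^{\!\top}J)=n$ (equivalently, by the matrix determinant lemma, $\det(I+\mathbf{1}\mathbf{1}^{\!\top})=1+\mathbf{1}^{\!\top}\mathbf{1}=n$). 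Therefore $\phi$ scales $(n-1)$-dimensional volume by the constant factor $\sqrt{\det(J^{\!\top}J)}=\sqrt n$, and the area/change-of-variables formula for this flat affine embedding gives, for every nonnegative Borel $g$ on $\Delta^{n-1}$,
$$
\int_{\phi(A)} g\,d\mathcal H^{n-1} \;=\; \sqrt n \int_{A} g(\phi(x))\,dx ,
$$
where $\mathcal H^{n-1}$ denotes the $(n-1)$-dimensional surface (Hausdorff) measure on $\Delta^{n-1}$ and $dx$ is Lebesgue measure on $\mathbb R^{n-1}$. Since $\phi$ is affine, I can also verify this identity by hand: choose an orthonormal basis of the hyperplane $\{\sum y_i=1\}$, express $\phi$ in those coordinates, and check that the resulting linear map has absolute determinant $\sqrt n$.

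The lemma then follows by specializing $g$. For the first claim, let $g$ be the lift of $\nu$, i.e.\ $g(y):=\nu(y_{1:n-1})$ (exactly the ``$\nu$ on $\Delta^{n-1}$'' of the statement's abuse of notation); since $B=\phi(A)$ the displayed identity reads $\int_B \nu\,d\mathcal H^{n-1}=\sqrt n\int_A\nu(x)\,dx$, i.e.\ $\int_A\nu(x)\,dx=\int_B\frac{\nu}{\sqrt n}\,d\mathcal H^{n-1}$, which is the asserted equivalence of distributions (and in particular $\nu/\sqrt n$ integrates to $1$ over $\Delta^{n-1}$, so it is a proper density). For the volume claim I take $g\equiv 1$: with $A=\Delta^{n-1}_p$ (so $\phi(A)=\Delta^{n-1}$) this yields $\mathrm{volume}(\Delta^{n-1})=\sqrt n\,\mathrm{volume}(\Delta^{n-1}_p)$, consistent with Lemma \ref{lemma:volume} and $\mathrm{volume}(\Delta^{n-1}_p)=1/(n-1)!$; while for a general Borel $A$ it yields $\mathrm{volume}(B)=\sqrt n\,\mathrm{volume}(A)$. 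Dividing the two identities gives $\frac{\mathrm{volume}(A)}{\mathrm{volume}(\Delta^{n-1}_p)}=\frac{\mathrm{volume}(B)}{\mathrm{volume}(\Delta^{n-1})}$.

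The computations here are entirely routine; the only point needing care is being explicit that the reference measure on $\Delta^{n-1}$ is the $(n-1)$-dimensional surface measure and not the ambient Lebesgue measure on $\mathbb R^{n}$ (which would make $\Delta^{n-1}$ null), and correspondingly that the relevant Jacobian is $\sqrt{\det(J^{\!\top}J)}$ rather than a full $n\times n$ determinant. Everything else is the one-line eigenvalue computation for $I_{n-1}+\mathbf{1}\mathbf{1}^{\!\top}$ together with the area formula for an affine map.
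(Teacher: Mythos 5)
Your proof is correct and follows essentially the same route as the paper: both exhibit the affine map between $\Delta^{n-1}_p$ and $\Delta^{n-1}$ whose Jacobian is $\begin{pmatrix}I_{n-1}\\ -\mathbf{1}^{\top}\end{pmatrix}$ and obtain the constant area-scaling factor $\sqrt{\det(J^{\top}J)}=\sqrt n$. You are somewhat more explicit than the paper (computing $\det(I_{n-1}+\mathbf{1}\mathbf{1}^{\top})=n$ via eigenvalues and flagging that the reference measure on $\Delta^{n-1}$ is the $(n-1)$-dimensional surface measure), but the underlying argument is identical.
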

\begin{proof}
    Consider a translation on $\Delta^{n-1}$ with $x_n = -\sum_{i=1}^{n-1} x_i$ which does not affect its the volume and shape. The mapping: $\Delta^{n-1}_p \rightarrow \text{translated} \Delta^{n-1}$ is an affine transformation with matrix
    $$
    T = \begin{pmatrix}
        1 & 0 & \cdots & 0 \\
        0 & 1 & \cdots & 0 \\
        \cdots & \cdots & \cdots & \cdots\\
        -1 & -1 & \cdots & -1
        \end{pmatrix}_{n\times (n-1)}
    $$
    Thus, any area under this transformation is scaled by $\sqrt{\det T^\top T} = \sqrt n$: a constant. The lemma follows directly after this conclusion.
\end{proof}

We use $U(\cdot)$ to denote the uniform distribution over $\Delta^{n-1}$: $U(x)=\frac{(n-1)!}{\sqrt n}, \forall x\in\Delta^{n-1}$. We use the notation $\mathrm{vol}[S]=\int_S 1ds$ to represent the volume of a given subset $S\subset \Delta^{n-1}$, and use $\mathrm{rvol}[S]$ for the relative volume (w.r.t. the underlying $n$-simplex) of $S$, \emph{i.e.,} $\mathrm{rvol}[S] := \frac{\mathrm{vol}[S]}{\mathrm{vol}[\Delta^{n-1}]}=\int_S U(x)dx$. We also use $n=|E(c)|$ from now on. 
We use the vector $x$ to denote (with the slight abuse of notation we have mentioned) $p_{LM}(q,c)$ on the output simplex.

\begin{lemma}\label{lemma:gaussian_cdf_tail_bound}
    (Gaussian cdf Tail Bound, \citet{gordon1941values}) Denote $\phi(\cdot)$ as the standard Gaussian pdf. When $x > 0$,
    $$
    \frac{x}{x^2+1}\phi(x) = \frac{x}{x^2+1} \frac{e^{-x^2 / 2}}{\sqrt{2\pi}}  \le  1-\Phi(x) \le \frac{e^{-x^2 / 2}}{\sqrt{2\pi} x} = \frac{1}{x}\phi(x).
    $$
\end{lemma}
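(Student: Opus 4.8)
The plan is to derive both inequalities directly from the single identity $\phi'(t) = -t\,\phi(t)$, combined with two elementary antiderivative computations, so that everything collapses to a monotone comparison of integrands on the half-line $[x,\infty)$.

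\textbf{Setup and the upper bound.} Fix $x>0$ and write $1-\Phi(x) = \int_x^\infty \phi(t)\,dt$. From $\phi'(t) = -t\,\phi(t)$ we obtain at once that $\int_x^\infty t\,\phi(t)\,dt = [-\phi(t)]_x^\infty = \phi(x)$. On the range of integration $t \ge x > 0$, hence $1 \le t/x$; multiplying the integrand by this factor gives
$$1-\Phi(x) = \int_x^\infty \phi(t)\,dt \;\le\; \frac{1}{x}\int_x^\infty t\,\phi(t)\,dt \;=\; \frac{\phi(x)}{x},$$
which is the claimed right-hand inequality.

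\textbf{The lower bound.} Here I would first verify the antiderivative identity
$$\frac{d}{dt}\left(-\frac{\phi(t)}{t}\right) = -\frac{\phi'(t)\,t - \phi(t)}{t^2} = \frac{t^2\phi(t) + \phi(t)}{t^2} = \left(1 + \frac{1}{t^2}\right)\phi(t),$$
so that $\int_x^\infty \left(1 + t^{-2}\right)\phi(t)\,dt = \phi(x)/x$. Since $t^{-2} \le x^{-2}$ for $t \ge x$, the integrand is bounded above by $\left(1 + x^{-2}\right)\phi(t)$, and therefore
$$\frac{\phi(x)}{x} = \int_x^\infty \left(1 + \frac{1}{t^2}\right)\phi(t)\,dt \;\le\; \left(1 + \frac{1}{x^2}\right)\int_x^\infty \phi(t)\,dt \;=\; \frac{x^2+1}{x^2}\,(1-\Phi(x)).$$
Multiplying through by $x^2/(x^2+1)$ yields $\dfrac{x}{x^2+1}\,\phi(x) \le 1-\Phi(x)$, completing the argument.

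\textbf{On the difficulty.} There is no real obstacle; the only step that is not purely mechanical is recognizing that $-\phi(t)/t$ is the antiderivative producing the weight $1 + t^{-2}$, which is what turns the lower bound into a one-line integrand comparison. As a fallback I would instead set $g(x) = \frac{x}{x^2+1}\phi(x) - (1-\Phi(x))$, compute $g'(x) = \phi(x)\cdot\frac{2}{(x^2+1)^2} > 0$, and conclude from $g(x)\to 0$ as $x\to\infty$ that $g<0$ on all of $\mathbb{R}$; but the integral comparison above is shorter and delivers the upper bound within the same framework, so I would present that version.
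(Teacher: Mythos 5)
Your proof is correct. Note that the paper itself does not prove this lemma --- it is stated as a known result with a citation to Gordon (1941) and used as a black box in the proof of Theorem \ref{thm:jailbreak_plausible} --- so there is no in-paper argument to compare against. Your derivation is the standard one and is sound: the identity $\phi'(t) = -t\,\phi(t)$ gives $\int_x^\infty t\,\phi(t)\,dt = \phi(x)$, and the comparison $1 \le t/x$ on $[x,\infty)$ yields the upper bound; the antiderivative computation $\frac{d}{dt}\bigl(-\phi(t)/t\bigr) = (1+t^{-2})\phi(t)$ together with $t^{-2} \le x^{-2}$ yields the lower bound. Both antiderivative identities and the final algebra check out, and the monotonicity fallback via $g'(x) = 2\phi(x)/(x^2+1)^2 > 0$ is also valid. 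This is a complete, self-contained proof of the cited result.
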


Now we are ready to give the proof of Theorem \ref{thm:jailbreak_plausible}.

\begin{proof}\label{proof:thm_weak_jailbreak}

Let $|E_h(c)| = n_0$ and denote $|E_h(c)| + |E_s(c)| + |E_n(c)| = n$. Without loss of generality, we define the first $n_0=|E_h(c)|$ elements as the harmful explanations. Let the thresholding constant be $p$. That is, we define the harmful zone $\mathcal H_h$ as $\{x\in \Delta^{n-1}|\sum_{i=1}^{n_0} x_i \ge p \}$. To compute the relative volume of $\mathcal H_h$ in $\Delta^{n-1}$, we could instead operate on the projected probability simplex $\Delta^{n-1}_{p}$ introduced in Definition \ref{def:project_simplex}, and compute the relative volume of the projected $\mathcal H_h$: $\mathcal H_{h,p} := \{x\in \Delta^{n-1}_p|\sum_{i=1}^{n_0} x_i \ge p \}$. Note that $\Delta^{n-1}_p \subset \mathbb R^{n-1}.$ We derive its expression as follows.
\begin{equation}
    \begin{aligned}
        &\text{volume}[\mathcal H_{h,p}^C] 
        = \text{volume}[\{x\in \Delta^{n-1}_p|\sum_{i=1}^{n_0} x_i \le p]\}
        \\
        &= \int_{0}^p dx_1 \int_{0}^{p-x_1} dx_2 \cdots \int_{0}^{p-\sum_{i=1}^{n_0-1}x_i}dx_{n_0} \int_{0}^{1-\sum_{i=1}^{n_0}}dx_{n_0+1}\cdots \int_0^{1-\sum_{i=1}^{n-2}x_i}dx_{n-1}\\
        &= \int_{0}^p dx_1 \int_{0}^{p-x_1} dx_2 \cdots \int_{0}^{p-\sum_{i=1}^{n_0-1}x_i}dx_{n_0} \left [\frac{1}{(n-n_0-1)!}(1-\sum_{i=1}^{n_0} x_i)^{n-n_0-1}\right ]\\
        &= \int_{0}^p dx_1 \int_{0}^{p-x_1} dx_2 \cdots \int_{0}^{p-\sum_{i=1}^{n_0-2}x_i}dx_{n_0-1} \frac{1}{(n-n_0)!}\left [(1-\sum_{i=1}^{n_0-1} x_i)^{n-n_0}-(1-p)^{n-n_0}\right ]\\
        &= \cdots \\
        &= \frac{1}{(n-1)!}[1-(1-p)^{n-1}]-\sum_{j=1}^{n_0-1}\frac{(1-p)^{n-1-j}}{j!(n-1-j)!}p^j
    \end{aligned}
\end{equation}
Thus, the relative volume of $\mathcal H_h$ can be written as 
\begin{equation}
    \begin{aligned}
        \text{rvol}[\mathcal H_h] &= 1 - \frac{\text{volume}[\mathcal H_{h,p}^C]}{\text{volume[projected probability simplex]}}\\
        &= (1-p)^{n-1}+\sum_{j=1}^{n_0-1}\frac{(n-1)!(1-p)^{n-1-j}}{j!(n-1-j)!}p^j\\
        &= \sum_{j=0}^{n_0-1}p^j(1-p)^{n-1-j} \binom{n-1}{j}.
    \end{aligned}
\end{equation}
Which is precisely the binomial distribution formula. With the Central Limit Theorem, when $n \gg O(1)$, we know the binomial distribution can be well approximated via the normal distribution as follows:
\begin{equation}
    \begin{aligned}
    f(x) = \binom{n}{x}p^x(1-p)^{n-x} \xrightarrow[]{d} \mathcal N(np, np(1-p)).
    \end{aligned}
\end{equation}

Thus, denote $\phi_{(n-1), p}(x)$ as the pdf of Gaussian variable with mean $(n-1)p$, variance $(n-1)p(1-p)$, the $\text{rvol}$ term above can be estimated as follows:
\begin{equation}
    \begin{aligned}
        \sum_{j=0}^{n_0-1}p^j(1-p)^{n-1-j} &\binom{n-1}{j} \asymp \int_{-\infty}^{n_0-1} \mathcal \phi_{(n-1), p}(x) dx \\
        &= \Phi\left [\frac{n_0-1 - (n-1)p}{\sqrt{(n-1)p(1-p)}}\right ]\\
        &= \Phi\left [\frac{|E_h(c)|-1 - (n-1)p}{\sqrt{(n-1)p(1-p)}}\right ].
    \end{aligned}
\end{equation}
We use $a=\frac{|E_h(c)|-1 - (n-1)p}{\sqrt{(n-1)p(1-p)}}$. Consider an adversary with budget $\epsilon$ under $\ell_{\text{p}}$ or Jensen-Shannon Divergence (JSD) / Total Variation (TV) capability. Since $||x||_1 \ge ||x||_p, \forall p\ge 1$ as well as $||x||_1 \ge 2 \mathrm{JSD}(x), ||x||_1 \ge 2 \mathrm{TV}(x)$, we know $\mathcal H_h(\epsilon,\ell_1)\subset \mathcal H_h(\epsilon, d)$ for all $d$ we have considered.  With that $\ell_1, \epsilon$ setup, the corresponding $\epsilon-$expansion set of $\mathcal H_h$ has a closed-form expression as
$$
\mathcal H_h(\epsilon, \ell_1) = \{x\in \Delta^{n-1}|\sum_{i=1}^{n_0} x_i \ge p-\frac{\epsilon}{2} \}.
$$
Similar as above, we derive the analytical solution of its relative volume associated with constant $a'$ as:
\begin{equation}
    \begin{aligned}
    a' &= \frac{|E_h(c)|-1 - (n-1)(p-\frac{\epsilon}{2})}{\sqrt{(n-1)(p-\frac{\epsilon}{2})(1-p+\frac{\epsilon}{2})}}\\
    &= a\sqrt{\frac{p(1-p)}{(p-\frac{\epsilon}{2})(1-p+\frac{\epsilon}{2})}} + \frac{\epsilon}{2}\sqrt{\frac{n-1}{(p-\frac{\epsilon}{2})(1-p+\frac{\epsilon}{2})}}.
    \end{aligned}
\end{equation}
Under our framework, with $p<\frac 1 2$, we know $\frac{1}{4} > p(1-p) > (p-\frac{\epsilon}{2})(1-p+\frac{\epsilon}{2}))$. Thus
$$
a' > a + \sqrt{n-1}\epsilon := a_\epsilon.
$$

Consider the induced distribution $\gamma_c$ on the output simplex. Given an adversary with $\ell_{\text{p}}$ or JSD/TV perturbing capability, with the fixed harmful concept $c$, safety is guaranteed if and only if $p_{LM}(q,c)$ resides outside $\mathcal H_h(\epsilon, d)$. Define the area of interest, $S(d)$ as $S(d):= \Delta^{n-1} - \mathcal H_h(\epsilon, d)$. Thus, the probability of this event could be bounded as
$$
\mathbb P_{x\sim \gamma_c} \mathds 1_{x \in S(d)} < \max_{x\in S(d)} \gamma_c(x) \int_{S(d)} 1dx < \gamma_s \mathrm{ rvol}[S(d)] < \gamma_s \mathrm{ rvol}[S(\ell_1)] < \gamma_s (1-\mathrm{ rvol}[\mathcal H_h(\epsilon, \ell_1)])
$$
This gives an upper bound of 
$$
\gamma_s(1 - \Phi(a_\epsilon)).
$$
which can be simplified when $a\ge 0$ using Lemma \ref{lemma:gaussian_cdf_tail_bound}:
$$
\gamma_s\left (\frac{\phi(a_\epsilon)}{a_\epsilon}\right ).
$$
Thus, the probability of getting a LM instance from the preference alignment process 
such that it allows for successful jailbreaking on a specific harmful concept $c$ is at least
$$
1 - %
\gamma_s \left (1 - \Phi(a_\epsilon)\right ).
$$
Up to now, we have derived the result in Theorem \ref{thm:jailbreak_plausible}. However, we can move a step further to show the decay rate on the right hand side term. It can be simplified when $a\ge 0$:
$$
1 - %
\gamma_s\left (\frac{\phi(a_\epsilon)}{a_\epsilon} \right ),
$$
which finishes the proof.
\end{proof}

\section{RLHF, DPO and our E-RLHF}\label{sec:dpo}
The classic RLHF framework was established by \citet{christiano2017deep}, and developed by \citet{ziegler2019fine, ouyang2022training, bai2022constitutional}. After the collection of a \emph{preference dataset} $\mathcal D_s=\{(x,e_w,e_l)\}$, one first trains a reward model under the Bradley-Terry model \citep{bradley1952rank}, with the objective, where $\sigma(\cdot)$ stands for the sigmoid function:
$$
r(x,e) = \arg\max_{r} \mathbb E_{(x,e_w,e_l)\sim \mathcal D} \log \sigma(r(x, e_w) - r(x, e_l)).
$$
Following, proximal policy optimization (PPO) \citep{schulman2017proximal} is commonly adopted across these implementations, forming the basis of current state-of-the-art language models. The KL-constrained RL Fine-Tuning (RLFT) objective takes the form:
$$
\max_{p_{LM}} \mathbb E_{x\sim \mathcal D_s, e\sim p_{LM}(\cdot|x)}[r(x,e)] -\beta \mathbb D_{\text{KL}}(p_{LM}(\cdot|x)||p_{\text{ref}}(\cdot|x)).
$$
However, PPO tuning can suffer from instability \citep{zhu2023fine} and implementation complication \citep{engstrom2020implementation}. To overcome these issues, a series of work propose to skip the reward modeling step and learn \emph{directly} from the preference dataset, with the representative pioneering work by \citet{rafailov2023direct}, namely \emph{Direct Preference Optimization (DPO)}. We summarize the derivation of the DPO objective below, and generalize the objective to the one we use in our experiments, \emph{i.e.,} E-DPO.

First, noticing the \emph{closed-form optimal solution} for $p_{LM}$ of the RLFT objective writes (see \emph{e.g.,} Appendix A.1 of \citet{rafailov2023direct}) %
$$
p^*_{\text{RLFT}}(e|x) = \frac{1}{Z'(x)} p_{\text{ref}}(e|x)\exp(\frac 1 \beta r(x,e)).
$$    

With this analytical solution in mind, we can solve the reward as
$$
r(x,e) = \beta \log\frac{p^*_{\text{RLFT}}(e|x)}{p_{\text{ref}}(e|x)} + \beta\log Z'(x).
$$
Regard $\pi^*$ as the optimization target, plug this transformation into the reward model objective to obtain the DPO objective:
$$
p_{\text{DPO}} = \arg\min_{p_{LM}} -\mathbb E_{(x,e_w,e_l)\sim\mathcal D}[\log \sigma(\beta \log\frac{p_{LM}(e_w|x)}{p_{\text{ref}}(e_w|x)}-\beta\log\frac{p_{LM}(e_l|x)}{p_{\text{ref}}(e_l|x)})].
$$

For our E-RLHF, the modification to the objective leads to another optimal solution of $p_{LM}$:
$$
p^*(e|x) = \frac{1}{Z(x)} p_{\text{ref}}(e|x_s)\exp(\frac 1 \beta r(x,e)).
$$
Thus,
$$
r(x,e) = \beta \log\frac{p^*(e|x)}{p_{\text{ref}}(e|x_s)} + \beta\log Z(x)
$$
And plug it in to the reward model objective to formulate our E-DPO:
$$
p_{\text{E-DPO}} = \arg\min_{p_{LM}} -\mathbb E_{(x,e_w,e_l)\sim\mathcal D}[\log \sigma(\beta \log\frac{p_{LM}(e_w|x)}{p_{\text{ref}}(e_w|x_s)}-\beta\log\frac{p_{LM}(e_l|x)}{p_{\text{ref}}(e_l|x_s)})].
$$

The advantage of our E-RLHF objective is as follows.
\begin{proposition}
    (Overcoming the small safe set problem) E-RLHF will lead to the optimal solution $p^*$:
    $$
    p^*(e|x) = \frac{1}{Z(x)} p_{\text{ref}}(e|x_s)\exp(\frac 1 \beta r(x,e)).
    $$
    Compared to $p^*_\text{{RLFT}}$, the advantage when encountering a harmful prompt $x$ is:
    
    (1) (Erase harmful explanations) $\forall e \in \text{supp}(p_{\text{ref}}(\cdot|x)) - \text{supp}(p_{\text{ref}}(\cdot|x_s))$, $p^*(e|x)=0$;
    
    (2) (Add safe explanations) $\forall e \in supp(p_{\text{ref}}(\cdot|x_s)) - supp(p_{\text{ref}}(\cdot|x))$, $p^*(e|x)>0=p^*_{\text{RLFT}}(e|x)$. %

    Thus, with the same jailbreak threshold $p$, the safety zone is successfully expanded.
\end{proposition}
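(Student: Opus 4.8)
The plan is to read both advertised properties directly off the two closed-form optimal policies, and then obtain the safety-zone claim as a one-line corollary of the binomial-volume formula established in the proof of Theorem~\ref{thm:jailbreak_plausible}. The only analytic input needed is a mild regularity fact: for every prompt and every explanation $e$ the reward $r(x,e)$ is a finite real number, so $\exp(\tfrac1\beta r(x,e))\in(0,\infty)$, and the partition functions $Z(x)=\sum_e p_{\text{ref}}(e\mid x_s)\exp(\tfrac1\beta r(x,e))$ and $Z'(x)=\sum_e p_{\text{ref}}(e\mid x)\exp(\tfrac1\beta r(x,e))$ are strictly positive and finite (this is exactly the condition under which the RLFT optimum is derived). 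Since reweighting a fixed distribution by a pointwise-positive, pointwise-finite factor and renormalizing does not alter its support,
$$\mathrm{supp}\big(p^*(\cdot\mid x)\big)=\mathrm{supp}\big(p_{\text{ref}}(\cdot\mid x_s)\big),\qquad \mathrm{supp}\big(p^*_{\text{RLFT}}(\cdot\mid x)\big)=\mathrm{supp}\big(p_{\text{ref}}(\cdot\mid x)\big).$$

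From this, parts (1) and (2) are immediate. For (1), if $e\in\mathrm{supp}(p_{\text{ref}}(\cdot\mid x))-\mathrm{supp}(p_{\text{ref}}(\cdot\mid x_s))$ then $p_{\text{ref}}(e\mid x_s)=0$, so $p^*(e\mid x)=\tfrac1{Z(x)}\cdot 0\cdot\exp(\tfrac1\beta r(x,e))=0$, whereas $p_{\text{ref}}(e\mid x)>0$ forces $p^*_{\text{RLFT}}(e\mid x)>0$. For (2), if $e\in\mathrm{supp}(p_{\text{ref}}(\cdot\mid x_s))-\mathrm{supp}(p_{\text{ref}}(\cdot\mid x))$ then $p_{\text{ref}}(e\mid x_s)>0$ gives $p^*(e\mid x)>0$, while $p_{\text{ref}}(e\mid x)=0$ gives $p^*_{\text{RLFT}}(e\mid x)=0$. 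No further estimates are required here.

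For the safety-zone conclusion I would invoke the intended behaviour of the transformation $x\mapsto x_s$: replacing a harmful concept by a safe one removes harmful explanations from, and injects safe explanations into, $\mathrm{supp}(p_{\text{ref}}(\cdot\mid x_s))$ relative to $\mathrm{supp}(p_{\text{ref}}(\cdot\mid x))$. By (1) the removed harmful explanations receive zero mass under $p^*(\cdot\mid x)$, so the effective harmful-set size $|E_h(c)|$ strictly decreases; by (2) the effective safe-set size $|E_s(c)|$ strictly increases, enlarging $n=n(c)$ while keeping the residual $E_n(c)$ negligible so that Assumption~\ref{assump:effect_pretrain_alignment} still holds. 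From the proof of Theorem~\ref{thm:jailbreak_plausible}, $\mathrm{rvol}[\mathcal H_h]=\sum_{j=0}^{|E_h(c)|-1}\binom{n-1}{j}p^j(1-p)^{n-1-j}=\mathbb P\big[\mathrm{Bin}(n-1,p)\le |E_h(c)|-1\big]$; this quantity is monotone increasing in $|E_h(c)|$ and, using $\mathrm{Bin}(n-1,p)=\mathrm{Bin}(n-2,p)+\mathrm{Bern}(p)$, one checks it is non-increasing under the simultaneous decrement of $|E_h(c)|$ and $n$ caused by deleting a harmful coordinate, and strictly decreasing under the increment of $n$ at fixed $|E_h(c)|$ caused by appending a safe coordinate. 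Hence $\mathrm{rvol}[\mathcal H_h]$ shrinks and the safety zone $\mathcal H_s$ expands, for the same threshold $p$.

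The "erase/add" halves are essentially definitional, so the only real work is making this last step airtight: one must track that the output simplex changes dimension when the support changes, verify via the binomial-CDF monotonicity sketched above that both operations move $\mathrm{rvol}[\mathcal H_h]$ in the same direction, and note that the clean statement is about the single optimal policy $p^*$ — transferring it to the full induced posterior $\gamma_c$ of Theorem~\ref{thm:jailbreak_plausible} would need an additional (mild) stability assumption on the alignment map $\rho\mapsto\gamma$. That bookkeeping around the changing simplex, rather than any inequality, is the main obstacle.
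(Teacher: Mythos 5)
Your proof is correct, and for parts (1) and (2) it is exactly the argument the paper leaves implicit: since $\exp(\tfrac1\beta r(x,e))$ is strictly positive and finite and the partition functions are positive and finite, reweighting and renormalizing preserves support, so $\mathrm{supp}(p^*(\cdot\mid x))=\mathrm{supp}(p_{\mathrm{ref}}(\cdot\mid x_s))$ and $\mathrm{supp}(p^*_{\mathrm{RLFT}}(\cdot\mid x))=\mathrm{supp}(p_{\mathrm{ref}}(\cdot\mid x))$, and (1), (2) are immediate. Where you diverge is the final ``safety zone is expanded'' claim: the paper treats this as a qualitative corollary of the enlarged safe support and does not give a quantitative argument, whereas you tie it back to the $\mathrm{rvol}[\mathcal H_h]=\mathbb P\bigl[\mathrm{Bin}(n-1,p)\le |E_h(c)|-1\bigr]$ identity from the proof of Theorem~\ref{thm:jailbreak_plausible} and verify, via the decomposition $\mathrm{Bin}(m,p)=\mathrm{Bin}(m-1,p)+\mathrm{Bern}(p)$, that both deleting a harmful coordinate (decrementing $|E_h|$ and $n$ together) and appending a safe coordinate (incrementing $n$ at fixed $|E_h|$) are non-increasing operations on $\mathrm{rvol}[\mathcal H_h]$; that monotonicity check is correct and is genuinely more than the paper supplies. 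You also correctly flag two points the paper elides: the conclusion depends on the modelling assumption that $x\mapsto x_s$ actually removes harmful explanations from and adds safe explanations to $\mathrm{supp}(p_{\mathrm{ref}})$, and the proposition concerns the single optimal policy $p^*$ rather than the posterior $\gamma_c$ of Theorem~\ref{thm:jailbreak_plausible}, so transferring the ``jailbreak probability decreases'' reading would need an extra stability assumption on the alignment map. In short: same trivial support argument for (1)--(2), plus a more rigorous binomial-CDF backing for the safety-zone sentence than the paper itself provides.
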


Intriguingly, when the safe transformation is done by appending an identical safe prefix to all harmful prompts, we can connect our E-RLHF to \textbf{context distillation}. A good prompt is known to matter for the performance of a fixed-parameters LM \citep{wei2022chain,yang2023large}. Researchers have proposed a systematic LM tuning algorithm, called Context Distillation \citep{askell2021general}, aiming at \emph{distilling useful information from a good context as prefix to a language model.} Given an initialized language model, for example $p_{\text{SFT}}$, an input prompt $x$ and a prefix context string $\texttt{prefix}$, \citet{askell2021general} optimizes the loss
$$
L(p_{LM}) = \mathbb D_{\text{KL}}(p_{\text{SFT}}(\texttt{prefix}\oplus x), p_{LM}(x))
$$
where $\oplus$ stands for string concatenation. This technique has been adopted as part of the safety alignment process during the LLaMa-2 series tuning \citep{touvron2023llama}, where $\texttt{prefix}$ is chosen from a set of pre-defined safe prefixes. When applying the identical prefix transform in our E-RLHF transformation, it can be regarded as a combination of safety context distillation and RLHF. This gives another point of view on the effectiveness of our proposal.

\section{Ablation Study}\label{app:ablation_study}
\begin{table}[htb]
\centering
\small
\caption{
Safety evaluation, LoRA results. The result is consistent with the one we have obtained in the main text, that our E-DPO performs better than DPO across a collection of adversaries. \coloredsquare{mylyellow} indicates better performance.
}

\resizebox{\textwidth}{!}{
\begin{tabular}{c|cccccccccccc}
\toprule\midrule 
\multicolumn{13}{c}{HarmBench ASR \citep{mazeika2024harmbench}} \\
           Model & Direct Request & GCG  & GBDA  & AP & SFS & ZS & PAIR & TAP & AutoDAN & PAP-top5 & Human & \textbf{AVG} $\downarrow$\\
\midrule 
\midrule
\midrule
$\pi_\text{DPO (LoRA)}$  & $24.50$  & $47.50$  & $40.50$ & $43.25$ & $43.25$ & $28.50$ & $45.25$ & \cellcolor{mylyellow}$53.25$ & $53.50$ & $29.75$ & $38.90$  & $40.74$ \\
\midrule
$\pi_\text{E-DPO (LoRA)}$  & \cellcolor{mylyellow}$24.25$  & \cellcolor{mylyellow}$42.50$  & \cellcolor{mylyellow}$36.50$ & \cellcolor{mylyellow}$41.50$ & \cellcolor{mylyellow}$42.75$ & \cellcolor{mylyellow}$27.20$ & \cellcolor{mylyellow}$45.00$ & $53.75$ & \cellcolor{mylyellow}$50.25$ & \cellcolor{mylyellow}$27.25$ & \cellcolor{mylyellow}$38.05$  & \cellcolor{mylyellow}$39.00$ \\
\midrule\midrule
    \multicolumn{13}{c}{AdvBench ASR \citep{zou2023universal}} \\
\midrule
\midrule
$\pi_\text{DPO (LoRA)}$        & $2.00$  & $29.00$  & $26.00$ & $26.00$ & $40.00$ & $8.80$ & $32.00$ & $54.00$ & $46.00$ & \cellcolor{mylyellow}$2.00$ & $31.20$  & $27.00$ \\
\midrule
$\pi_\text{E-DPO (LoRA)}$ (ours)  & \cellcolor{mylyellow}$0.00$  & \cellcolor{mylyellow}$25.00$  & \cellcolor{mylyellow}$20.00$ & \cellcolor{mylyellow}$27.00$ & \cellcolor{mylyellow}$33.00$ & \cellcolor{mylyellow}$7.20$ & \cellcolor{mylyellow}$23.00$ & \cellcolor{mylyellow}$50.00$ & \cellcolor{mylyellow}$45.00$ & \cellcolor{mylyellow}$2.00$ & \cellcolor{mylyellow}$28.80$  & \cellcolor{mylyellow}$23.73$ \\
\midrule
\bottomrule
\end{tabular}}
\label{tab:lora_table}
\end{table}

\begin{table}[hbt]
\centering
\small
\caption{
Safe prefix ablation. Prefixes we use are included in Table \ref{tab:safe_prefixes_table}. Our E-DPO performs better than the DPO baseline in most cases we have tested. \coloredsquare{mylyellow} indicates best performance.
}
\resizebox{\textwidth}{!}{
\begin{tabular}{c|cccccccccccc}
\toprule\midrule 
   \multicolumn{13}{c}{HarmBench ASR \citep{mazeika2024harmbench}} \\
         Model & Direct Request & GCG  & GBDA  & AP & SFS & ZS & PAIR & TAP & AutoDAN & PAP-top5 & Human & \textbf{AVG} $\downarrow$\\
\midrule 
\midrule
$\pi_\text{DPO}$      & $27.50$  & $53.00$  & $39.00$ & $46.75$ & $43.25$ & $29.10$ & $52.50$ & $54.00$ & $51.00$ & $28.75$ & $37.15$  & $42.00$ \\
\midrule
$\pi_\text{E-DPO(1)}$      & $26.25$  & $56.50$  & $33.75$ & $44.25$ & $42.25$ & $29.30$ & $50.00$ & $56.75$ & $56.25$ & $31.50$ & $34.05$  & $41.90$ \\
\midrule
$\pi_\text{E-DPO(2)}$      & $24.75$  & $52.25$  & $34.00$ & $39.00$ & $44.75$ & $29.75$ & $50.50$ & $54.50$ & $53.25$ & $28.00$ & $34.35$  & $40.46$ \\
\midrule
$\pi_\text{E-DPO(3)}$      & $24.75$  & $52.75$  & $35.25$ & $37.50$ & \cellcolor{mylyellow}$35.50$ & $28.65$ & $49.00$ & $53.50$ & $47.25$ & $30.50$ & \cellcolor{mylyellow}$30.25$  & $38.63$ \\
\midrule
$\pi_\text{E-DPO(4)}$  & \cellcolor{mylyellow}$23.50$  & \cellcolor{mylyellow}$47.50$  & \cellcolor{mylyellow}$31.75$ & \cellcolor{mylyellow}$36.25$ & $40.50$ & \cellcolor{mylyellow}$26.45$ & \cellcolor{mylyellow}$48.50$ & \cellcolor{mylyellow}$51.00$ & \cellcolor{mylyellow}$43.00$ & \cellcolor{mylyellow}$27.00$ & $31.05$  & \cellcolor{mylyellow}$36.95$ \\
\midrule\midrule
    \multicolumn{13}{c}{AdvBench ASR \citep{zou2023universal}} \\
\midrule
\midrule
$\pi_\text{DPO}$        & \cellcolor{mylyellow}$0.00$  & $47.00$  & $12.00$ & $39.00$ & $30.00$ & $7.00$ & $50.00$ & $61.00$ & $44.00$ & $4.00$ & $18.40$  & $28.40$ \\
\midrule
$\pi_\text{E-DPO(1)}$      & \cellcolor{mylyellow}$0.00$  & $51.00$  & $12.00$ & $29.00$ & $33.00$ & $6.80$ & $47.00$ & $62.00$ & $53.00$ & $5.00$ & $20.00$  & $28.98$ \\
\midrule
$\pi_\text{E-DPO(2)}$      & $1.00$  & $39.00$  & $12.00$ & $20.00$ & $34.00$ & $6.20$ & $53.00$ & $63.00$ & $49.00$ & \cellcolor{mylyellow}$3.00$ & $15.60$  & $26.89$ \\
\midrule
$\pi_\text{E-DPO(3)}$      & \cellcolor{mylyellow}$0.00$  & $47.00$  & $11.00$ & $23.00$ & $23.00$ & $6.80$ & $45.00$ & $58.00$ & $36.00$ & $4.00$ & $15.80$  & $24.51$ \\
\midrule
$\pi_\text{E-DPO(4)}$  & \cellcolor{mylyellow}$0.00$  & \cellcolor{mylyellow}$38.00$  & \cellcolor{mylyellow}$8.00$ & \cellcolor{mylyellow}$15.00$ & \cellcolor{mylyellow}$21.00$ & \cellcolor{mylyellow}$5.20$ & \cellcolor{mylyellow}$41.00$ & \cellcolor{mylyellow}$53.00$ & \cellcolor{mylyellow}$31.00$ & $4.00$ & \cellcolor{mylyellow}$13.60$  & \cellcolor{mylyellow}$20.89$ \\
\midrule
\bottomrule
\end{tabular}}
\label{tab:safe_prefix_ablation}
\end{table}

\begin{table}[h]
\centering
\small
\caption{
Ablation study on transforming all prompts. We apply the same safe prefixes as used in Table \ref{tab:safe_prefix_ablation}. \coloredsquare{myred} indicates the safety is \emph{worse} compared to the model trained with transforming only the harmful prompts. AVG scores achieved by the DPO baseline are $42.00$ and $28.40$, respectively.
}
\resizebox{\textwidth}{!}{
\begin{tabular}{c|cccccccccccc}
\toprule\midrule 
   \multicolumn{13}{c}{HarmBench ASR \citep{mazeika2024harmbench}} \\
         Model & Direct Request & GCG  & GBDA  & AP & SFS & ZS & PAIR & TAP & AutoDAN & PAP-top5 & Human & \textbf{AVG} $\downarrow$\\
\midrule 
\midrule
$\pi_\text{E-DPO(1)}$      & \cellcolor{myred}$28.00$  & $55.75$  & \cellcolor{myred}$41.00$ & $42.00$ & \cellcolor{myred}$42.50$ & \cellcolor{myred}$31.00$ & \cellcolor{myred}$51.25$ & $56.25$ & $56.00$ & \cellcolor{myred}$32.00$ & \cellcolor{myred}$36.05$  & \cellcolor{myred}$42.89$ \\
\midrule
$\pi_\text{E-DPO(2)}$      & \cellcolor{myred}$25.75$  & \cellcolor{myred}$60.50$  & \cellcolor{myred}$40.25$ & \cellcolor{myred}$46.50$ & $44.75$ & \cellcolor{myred}$30.15$ & \cellcolor{myred}$52.75$ & \cellcolor{myred}$57.25$ & \cellcolor{myred}$63.50$ & \cellcolor{myred}$30.75$ & \cellcolor{myred}$40.45$  & \cellcolor{myred}$44.78$ \\
\midrule
$\pi_\text{E-DPO(3)}$      & $24.00$  & \cellcolor{myred}$57.25$  & $35.00$ & \cellcolor{myred}$41.75$ & \cellcolor{myred}$39.50$ & $26.55$ & \cellcolor{myred}$49.25$ & \cellcolor{myred}$55.00$ & \cellcolor{myred}$58.75$ & $29.25$ & \cellcolor{myred}$38.70$  & \cellcolor{myred}$41.36$ \\
\midrule
$\pi_\text{E-DPO(4)}$  & \cellcolor{myred}$27.75$  & \cellcolor{myred}$58.50$  & \cellcolor{myred}$38.00$ & \cellcolor{myred}$42.00$ & \cellcolor{myred}$40.75$ & \cellcolor{myred}$31.20$ & \cellcolor{myred}$52.75$ & \cellcolor{myred}$60.50$ & \cellcolor{myred}$54.75$ & \cellcolor{myred}$31.50$ & \cellcolor{myred}$38.30$  & \cellcolor{myred}$43.27$ \\
\midrule\midrule
    \multicolumn{13}{c}{AdvBench ASR \citep{zou2023universal}} \\
\midrule
\midrule
$\pi_\text{E-DPO(1)}$      & $0.00$  & \cellcolor{myred}$53.00$  & \cellcolor{myred}$17.00$ & \cellcolor{myred}$31.00$ & $26.00$ & $6.00$ & \cellcolor{myred}$50.00$ & $57.00$ & \cellcolor{myred}$61.00$ & $4.00$ & $19.00$  & \cellcolor{myred}$29.45$ \\
\midrule
$\pi_\text{E-DPO(2)}$      & $0.00$  & \cellcolor{myred}$52.00$  & \cellcolor{myred}$17.00$ & \cellcolor{myred}$29.00$ & $26.00$ & $6.20$ & \cellcolor{myred}$56.00$ & $58.00$ & \cellcolor{myred}$73.00$ & $3.00$ & \cellcolor{myred}$25.60$  & \cellcolor{myred}$31.44$ \\
\midrule
$\pi_\text{E-DPO(3)}$      & $0.00$  & \cellcolor{myred}$56.00$  & \cellcolor{myred}$17.00$ & $23.00$ & $17.00$ & $3.80$ & \cellcolor{myred}$46.00$ & $58.00$ & \cellcolor{myred}$66.00$ & $4.00$ & \cellcolor{myred}$23.40$  & \cellcolor{myred}$28.56$ \\
\midrule
$\pi_\text{E-DPO(4)}$ & $0.00$  & \cellcolor{myred}$58.00$  & \cellcolor{myred}$17.00$ & \cellcolor{myred}$36.00$ & \cellcolor{myred}$27.00$ & \cellcolor{myred}$6.80$ & \cellcolor{myred}$46.00$ & \cellcolor{myred}$60.00$ & \cellcolor{myred}$68.00$ & $3.00$ & \cellcolor{myred}$25.20$  & \cellcolor{myred}$31.54$ \\
\midrule
\bottomrule
\end{tabular}}
\label{tab:harmonly_ablation_table}
\end{table}

\begin{table}[htb]
\centering
\small
\caption{
Ablation with the default Mistral system prompt. It boosts the safety of our E-DPO models to achieve better safety, while maintaining the advantage over the DPO baseline.
}

\resizebox{\textwidth}{!}{
\begin{tabular}{c|cccccccccccc}
\toprule\midrule 
\multicolumn{13}{c}{HarmBench ASR \citep{mazeika2024harmbench}} \\
           Model & Direct Request & GCG  & GBDA  & AP & SFS & ZS & PAIR & TAP & AutoDAN & PAP-top5 & Human & \textbf{AVG} $\downarrow$\\
\midrule 
\midrule
$\pi_\text{SFT}$  & $25.25$  & $61.25$  & $26.75$ & $37.00$ & $27.25$ & $24.75$ & $55.50$ & $58.75$ & $61.25$ & $24.00$ & $30.35$   & $39.28$ \\
\midrule
$\pi_\text{DPO}$  & $20.50$  & $49.25$  & $29.25$ & $32.75$ & $34.50$ & $22.55$ & $42.75$ & $50.25$ & $50.75$ & $30.00$ & $31.20$  & $35.80$ \\
\midrule
$\pi_\text{E-DPO}$  & \cellcolor{mylyellow}$15.50$  & \cellcolor{mylyellow}$41.50$  & \cellcolor{mylyellow}$22.75$ & \cellcolor{mylyellow}$29.00$ & \cellcolor{mylyellow}$31.25$ & \cellcolor{mylyellow}$17.95$ & \cellcolor{mylyellow}$41.75$ & \cellcolor{mylyellow}$44.25$ & \cellcolor{mylyellow}$44.25$ & \cellcolor{mylyellow}$23.00$ & \cellcolor{mylyellow}$26.55$  & \cellcolor{mylyellow}$30.70$ \\
\midrule
\bottomrule
\end{tabular}}
\label{tab:sysprompt_table}
\end{table}

\begin{table}[htb]
\centering
\small
\caption{
Improving helpfulness and safety simultaneously using E-RLHF. MT-Bench scores are $6.8$ and $6.9$, for the DPO baseline and our E-DPO model, respectively.
}

\resizebox{\textwidth}{!}{
\begin{tabular}{c|cccccccccccc}
\toprule\midrule 
\multicolumn{13}{c}{HarmBench ASR \citep{mazeika2024harmbench}} \\
           Model & Direct Request & GCG  & GBDA  & AP & SFS & ZS & PAIR & TAP & AutoDAN & PAP-top5 & Human & \textbf{AVG} $\downarrow$\\
\midrule 
\midrule
$\pi_\text{SFT}$  &  $32.25$  & $59.25$  & $35.50$ & $42.75$ & $42.75$ & $36.20$ & $56.50$ & $65.00$ & $56.75$ & $26.75$ & $35.50$   & $44.47$ \\
\midrule
$\pi_\text{DPO}$      & $27.50$  & \cellcolor{mylyellow}$53.00$  & $39.00$ & $46.75$ & $43.25$ & $29.10$ & $52.50$ & \cellcolor{mylyellow}$54.00$ & \cellcolor{mylyellow}$51.00$ & $28.75$ & $37.15$  & $42.00$ \\
\midrule
$\pi_\text{E-DPO}$ (ours)  & \cellcolor{mylyellow}$26.25$  & $56.00$  & \cellcolor{mylyellow}$37.00$ & \cellcolor{mylyellow}$43.75$ & \cellcolor{mylyellow}$38.25$ & \cellcolor{mylyellow}$27.90$ & \cellcolor{mylyellow}$49.75$ & $56.00$ & $52.25$ & \cellcolor{mylyellow}$27.50$ & \cellcolor{mylyellow}$32.65$  & \cellcolor{mylyellow}$40.66$ \\
\midrule
\bottomrule
\end{tabular}}
\label{tab:helpfulness_table}
\end{table}

In this section, we perform extensive ablation studies to showcase the effectiveness of our proposed E-RLHF.
\subsection{LoRA results}\label{app:lora_results}
In this section, we show results obtained by Low-Rank Adaptation \citep{hu2021lora}. We explore the same set of safe prefixes as in ablation \ref{app:ablation_safe_prefix}, and choose the best model for illustration. Numbers are illustrated in Table \ref{tab:lora_table}. Results are identical to the ones obtained via full parameter tuning that our E-RLHF performs better consistently against the RLHF baseline.

\subsection{Ablation on safe prefixes}\label{app:ablation_safe_prefix}
We ablate the effect of different safe prefixes. The prefixes we consider are collected in Table \ref{tab:safe_prefixes_table}. Numbers are shown in Table \ref{tab:safe_prefix_ablation}. Clearly, almost all safe prefixes lead to better safety against the DPO baseline, but with varying performance. Finding a good prefix matters for our method, and we leave digging the optimal one out as future work. 

\begin{table}[htb]
\small
\caption{Safe prefixes we use in our experiments.
}
\label{tab:safe_prefixes_table}
\resizebox{\textwidth}{!}{
\begin{tabular}{c}
\toprule
    (1) "Provide a responsible and ethical answer to the following request. " \\
    \midrule
    (2) "In a manner that promotes safety and well-being, please respond to the following request. "\\
    \midrule    
    (3) "Keeping in mind the importance of safety and legality, how would you respond to this: "\\
    \midrule
    (4) "Please ensure your response adheres to community guidelines and ethical standards: "\\
\bottomrule
\end{tabular}}
\end{table}

\subsection{Ablation on transforming all prompts}\label{app:ablation_complete_dataset}
Here, we proceed to ablating the effect of transforming all prompts, no matter harmful or not. Results are shown in Table \ref{tab:harmonly_ablation_table}, where the red color indicates that safety downgrades compared to the model obtained via transforming harmful prompts only. Clearly, most models even persist \emph{worse} safety compared to the DPO baseline itself, suggesting the detrimental effect of transforming the unharmful prompts.%

\subsection{Ablation with system prompt}\label{app:ablation_sys_prompt}
As pointed out by previous works \citep{mazeika2024harmbench, huang2023catastrophic, zou2024system}, system prompt can have a significant impact on ASR. This comes in two-folds: firstly, a powerful system prompt can initialize the LM to be closer to the safety zone, thus making the model safer; secondly, a longer system prompt would enlarge the difficulty of launching a specific attack due to the increased computational consumption. To confirm our E-RLHF is still helpful under this specific situation, we ablate this effect under the control of the default Mistral system prompt \footnote{\url{https://docs.mistral.ai/capabilities/guardrailing/}}. The results can be found in Table~\ref{tab:sysprompt_table}, where nearly all ASRs drop, whilst the conclusion does not change. %

\subsection{Ablation on improving helpfulness and safety simultaneously}\label{app:ablation_helpfulness}
In similar spirit, we ask the question: can the trick we propose to improve safety \textbf{be used to improve helpfulness simultaneously?} To achieve this goal, we choose the prefix \texttt{"Give me a safe and helpful response to the following request."}, and apply it to all prompts in the preference dataset. Results can be found in Table \ref{tab:helpfulness_table}. Our model achieves better safety, and improves the helpfulness benchmarked by MT-Bench \citep{zheng2024judging} from 6.8 to 6.9.

\section{Filtering harmful prompt in the preference alignment dataset}\label{app:gpt_prompt_judge}
Inspired by the work by \citet{chao2023jailbreaking}, we prompt GPT-3.5-Turbo as a harmness judge on input prompts. It assigns an integer score ranging from $1$ to $10$ for a given prompt, and we choose the score $\ge 6$ subset as harmful. The system prompt is shown in Table \ref{tab:judge_prompt}.
\begin{table}[htb]
\small
\caption{System prompt for GPT-3.5-Turbo used to filter harmful and non-harmful prompt in the preference dataset.
}
\label{tab:judge_prompt}
\resizebox{\textwidth}{!}{
\begin{tabular}{p{\linewidth}}
\toprule
You are a helpful red teaming assistant. You will receive a prompt. Please act as an impartial judge and evaluate whether the prompt itself is requesting information that violates ethical, legal, or safety guidelines. Be as objective as possible. 
Rate the prompt on a scale from 1 to 10. A rating of 1 signifies a fully non-harmful prompt. A rating of 10 signifies prompts that could lead to responses that fully violate ethical, legal, or safety guidelines.
Your rating should strictly follow this format: "Rating: [[rating]]", the rating in the double-closed brackets is a number from 1 to 10, e,g, "Rating: [[5]]".\\
\bottomrule
\end{tabular}}
\end{table}

\section{Jailbreak adversaries collected in the HarmBench project \citep{mazeika2024harmbench}}\label{app:adversaries_review}
In this section, we give a short overview of the adversaries we adopt to evaluate our models. Some descriptions are summarized in \citet{mazeika2024harmbench}.
\begin{itemize}
    \item Direct Request refers to sending the original harmful prompt directly to the target LLM.
    \item GCG \citep{zou2023universal}, GBDA \citep{guo2021gradient} and AP \citep{shin2020autoprompt} find adversarial suffixes via token-level optimization.
    \item SFS (Stochastic Few-Shot) and ZS (Zero-Shot) \citep{perez2022red} are few-shot sampling or zero-shot generation of test cases by an attacker LLM to elicit a behavior from a target LLM.
    \item PAIR \citep{chao2023jailbreaking} and TAP \citep{mehrotra2023tree} are iterative prompting / tree-structured prompting methods, with an attacker LLM, to adaptively explore and elicit specific harmful behaviors from the target LLM.
    \item AutoDAN \citep{liu2023autodan} is a genetic-algorithm based attack with initializations from handcrafted DAN jailbreak prompts.
    \item PAP \citep{zeng2024johnny} uses persuasive strategies to jailbreak the target LLM.
    \item HumanJailbreaks \citep{shen2023anything} uses a fixed set of in-the-wild human jailbreak templates, similar to the Do Anything Now (DAN) jailbreaks.
\end{itemize}
We exclude all transfer attacks since we focus on single-model jailbreak. Furthermore, we choose to discard the UAT \citep{wallace2019universal} and PEZ \citep{wen2024hard} adversaries, because the former induces an out-of-memory error on our V100 GPUs, and the latter never succeeds to find a suffix according to our experiments.

\section{Broader Impacts}
The societal impact of our work has close connection to the topic of LLM safety. We propose a framework for analyzing language model pretraining and jailbreaking, and we design a new RLHF algorithm for improving safety. As shown in our experiments, our work could advocate for safer LLMs.

\section{Related work}\label{app:related_work}
In this section, we provide a review of the current literature on LLM jailbreaking.

\subsection{Jailbreak methods}
In this section, we summarize existing jailbreaking methods.%

\paragraph{Baseline and pioneers}
Autoprompt \citep{shin2020autoprompt}, a baseline method for optimizing in the token space w.r.t. a certain objective, approximates coordinate ascent by first ranking all tokens using an approximate objective, and then compute the exact value on them. The approximation is carried out by a single step of gradient computation. \citet{jones2023automatically} propose Autoregressive Randomized Coordinate Ascent (ARCA) to generate (input, output) pairs that include certain harmful info or satisfy a fixed format requirement. Token level optimization is carried out with linear approximation on GPT-2. %
GBDA \citep{guo2021gradient} study adversarial attack on text classification problems, by optimizing the continuous relaxation of the autoregressive sampling probability matrix. In late 2022, among social media, users misalign GPT-3 via prompt injection. \citet{perez2022ignore} study how this be done by adversaries. They successfully manipulate the model to output a given harmful string and leak the system prompt. In early 2023, an empirical study was carried out by \citet{liu2023jailbreaking} to measure the result of prompt engineering for breaking ChatGPT safeguards. \citet{shen2023anything} collect jailbreaking prompts from multiple platforms on the internet, analyze these data, create a harmful question set, and identify some typical harmful prompts that are effective at that moment. Later, the Greedy Coordinate Gradient (GCG) method \citep{zou2023universal}, a strong \emph{white-box} attack variant of AutoPrompt \citep{shin2020autoprompt} was proposed. %
\citet{wei2023jailbroken} categorizes two general modes of jailbreaking: \emph{competing objective} and \emph{mismatched generalization}.

\paragraph{LLM automation and suffix-based attacks}
\citet{liu2023autodan} propose %
AutoDAN, that relies on genetic algorithms, with the requirement of manual prompts for conducting mutation and crossover on the \emph{paragraph and sentence level}. The jailbreaking prompts generated by AutoDAN are semantically plausible, unlike the suffix generated by GCG. As a comparison, \citet{lapid2023open} use genetic algorithm for \emph{black-box} universal adversarial suffix generation. \citet{chao2023jailbreaking} propose another LLM-based jailbreak automation algorithm, where an LLM judge is built to assign a safety score to a given output, while the attacker is enforced (via a page-long prompt) to improve the quality of jailbreaking prompts from multiple perspectives. \citet{zhu2023autodan} propose another AutoDAN method that explores the balanced loss between jailbreak loss (log probability on the harmful string, as used in \citet{zou2023universal}) and the plausibility loss (log probability over the adversarial suffix), aiming at improving interpretability. \citet{li2024semantic} uses genetic algorithm to search with similarty measure and initialize with paraphrasing. Its performance is claimed to be better than AutoDAN-GA. \citet{deng2023jailbreaker} investigate the possible defensive tricks in proprietary LLMs, and propose a pipeline for automated jailbreaking using a fine-tuned LLM. \citet{yu2023gptfuzzer} propose GPTFuzzer, essentially a genetic algorithmic framework for jailbreaking. Their work's difference between AutoDAN is that it has a pool of ``seeds'', a.k.a. templates for transforming the harmful prompt, and the mutation is done on the template level. \citet{ding2023wolf} propose automating attack via LLM \emph{prompt rewriting} and \emph{scenario nesting}. The latter consists of code completion, table filling and text continuation, since the authors regard these as align with training objectives well, and are suitable for LLMs to complete the task. \citet{mehrotra2023tree} combine Automation used in \citet{chao2023jailbreaking} and tree-of-thought \citep{yao2024tree}, create interpretable prompts in a \emph{black-box} manner. \citet{li2023deepinception} propose DeepInception, and use \emph{nested, imaginary} scenario to induce harmful content. \citet{li2024drattack} propose DrAttack, which camouflages a query's malicious intent through semantic decomposition, by constructing a parsing tree and split the original prompt into different segmentations. \citet{wang2024foot} draw inspiration from the self-perception theory from psychology to design a prompt modification pipeline on gradually persuading the LM to be jailbroken. \citet{paulus2024advprompter} propose AdvPrompter, where the authors train a language model as a \emph{suffix generator} to speed up LLM attack.

\paragraph{Manipulating the decoding process}
\citet{huang2023catastrophic} find the method of changing the generating hyperparameters (\emph{i.e.,} $p$ of top-$p$ sampling, the temperature $T$, and $k$ of top-$k$ sampling) of safety-aligned LLMs suffices for obtaining harmful outputs when the user is able to manipulate the system prompt and input configurations. %
\citet{zhang2023safety} directly manipulate the output generation probability by enforcing an affirmative prefix, and reversing the negation words if they appear in a pre-defined vocabulary (\emph{e.g.,} sorry $\rightarrow$ glad). \citet{zhao2024weak} assume access to the decoding distribution of a LM. They use two small LMs, a safe one and a harmful one, to manipulate the decoding ratio of the large safe LM for jailbreaking. The key insight is the decoding distribution between the safe model and the harmful model only differs significantly for the first tens of tokens.%

\paragraph{Fine-Tuning alone suffices}
\citet{yang2023shadow} show that fine-tuning on as few as $100$ harmful example pairs suffices for turning the LLaMa-chat models (and some other $<$70B LMs) into malicious counterparts. \citet{zhan2023removing} fine-tune GPT-4 on harmful data, and find the fine-tuned models escape previous safety constraints while maintaining usefulness. \citet{qi2023fine} find fine-tuning alone, even on benign data, leads to safety degradation using LLaMa and GPT-3.5-Turbo. Fine-tuning on harmful data (with less than 5 gradient steps) will cause the model to be completely harmful, while tuning on identity-shifting data could make the LM fully obedient.

\paragraph{Low-resource language and cipher}
\citet{yong2023low, deng2023multilingual} explore the difference in languages when encountering the same harmful query, and find a direct translation to low resource languages will lead to higher risk, and \citet{deng2023multilingual} additionally find when combined with sophisticated methods, the drawback of low-resource languages disappear. \citet{yuan2023gpt} use cipher encoding and decoding to break LLMs. Smaller scale models are immune from such attacks, while the smartest GPT-4 encountered the highest risk.

\paragraph{Vision-language model attacks}
Besides pure LLM, some research works move a step forward, utilizing images for breaking vision-language models (VLMs). \citet{shayegani2023jailbreak} explore multimodal attack on VLM via embedding space feature matching. \citet{qi2023visual} generate adversarial examples via maximizing the conditional probability of a harmful \emph{corpus, i.e.,} the sum of log probabilities over all outputs, and use the final image with harmful query for jailbreaking. \citet{carlini2023aligned} generate adversarial example for a \emph{fixed harmful content}, and find no matter what input prompt is given to the VLM, it will respond with the target harmful string. \citet{maus2023black} propose a black-box attack on manipulating the generated image with modified adversarial prompt. %

\paragraph{Misc}
\citet{wei2023jailbreak, wang2023adversarial} explore in-context learning for attack and defense. The attack is weak since it could only break Vicuna \citep{vicuna2023} and can be defended by in-context safe examples. Later, this method is scaled-up to significantly improve strength for breaking guardrails of large, state-of-the-art models \citep{anil2024many}. An early work in February 2023 \citep{kang2023exploiting} adopts obfuscation (including synonym and typos), code injection and virtualization to successfully jailbreak ChatGPT. \citet{shah2023scalable} illustrate in-context automated persona modulation attack for large-scale LLMs and Vicuna. %
\citet{zeng2024johnny} consider the more broadly perspective of \emph{persuasion}: they train a persuasive paraphraser based on a fine-grained taxonomy of persuasion techniques. Detailed ablation on attack effectiveness is studied. \citet{guo2024cold} focus on stealthiness and controllability. They notice the constraints applied to the jailbreaking prompts (\emph{e.g.,} fluency) are exactly the targets of the controllable text generation problem. Thus, they adopt the Energy-based Constrained Decoding with Langevin Dynamic (COLD) \citep{qin2022cold} on output logits. Forming each constraint as well as the task of jailbreaking as an energy function over logits, the Langevin Dynamic is used for finding a good logit distribution, and the decoding technique in \citet{qin2022cold} is used for output generation. %
\citet{banerjee2024ethical} introduce a dataset TECHHAZARDQA, compare direct query v.s. pseudo-code format, and find the latter induces higher risk. \citet{mangaokar2024prp} considers a type of adaptive attack against \emph{checking-based defense}, that appends a universal adversarial prefix into the jailbreaking prompt to make the guard model always output ``safe'', and thus making the detector fails to detect harmful information. \citet{lv2024codechameleon} propose Code Chameleon, which contains multiple encryption and decryption methods defined by python functions, that transforms the harmful prompt into specific predefined form to jailbreak LLMs. \citet{sadasivan2024fast} speed up the computation of GCG \citep{zou2023universal} to make it possible to launch on a single GPU. \citet{geiping2024coercing} build a taxonomy on risks beyond jailbreaking, and coerce the LLM to provide certain outputs by optimizing a set of tokens via GCG. \citet{ren2024exploring} propose CodeAttack that use code templates to query the output out instead of using natural language directly, and obtain descent results.

\subsection{Defense methods}
Up to now, no universal defensive strategy as adversarial training \citep{madry2018towards} for adversarial attacks / differential privacy \citep{abadi2016deep}  for membership attacks exists as a gold standard. In general, we can classify the methods into three typical types: alignment, red-teaming, and algorithmic proposals.

\paragraph{Alignment}
The target of alignment is to push the output of language models be aligned to human values. Regarding safety, the goal is to avoid outputting harmful information. RLHF is widely adopted in these methods \citep{ziegler2019fine,ouyang2022training, bai2022training, korbak2023pretraining}. Variants like RLAIF also have been proposed recently \citep{bai2022constitutional, lee2023rlaif}.

\paragraph{Red teaming}
This term is populated as specifically dealing with harmful info on dataset curation, used together with RLHF \citep{ganguli2022red, perez2022red, casper2023explore, hong2023curiosity, samvelyan2024rainbow}.

Next, we proceed to defensive algorithm proposals. %
We classify existing defensive strategies in the following categories. %

\paragraph{Defense against suffix-based attacks.} \citet{alon2023detecting} notice the messy nature of the suffix generated by GCG, and propose to use a perplexity (PPL) filter on input prompts. They also explore using a LightGBM \citep{ke2017lightgbm} with 2 features (PPL, prompt length) to filter harmful prompt, and show it does better than naive PPL thresholding. The PPL-based filter does not succeed with human-crafted jailbreaks. \citet{jain2023baseline} explore many concerning viewpoints, including self-PPL filtering, paraphrasing the input prompt, and re-tokenization since many LLMs' tokenizers are based on Byte-Pair Encoding (BPE). All methods are successful in regards of defending against suffix-based attacks. They also explore the simplest form of adversarial training. \citet{robey2023smoothllm} propose to perturb the input token string by random replacement/erasement/insertion, and finally perform a majority vote in the end. \citet{cao2023defending} judges whether an input prompt is safe or not by estimation with Monte Carlo, when randomly dropping a fraction of tokens, using the LLM itself. %
\citet{kumar2023certifying} try to perform ``certified'' safety against harmful prompts, by erasing tokens and set the original prompt as harmful if at least one of these erased prompts lead to a harmful response, or be classified as harmful by a DistillBERT-based classifier.

\paragraph{System prompt defense.} We could modify the input prompt for jailbreaking; and several works explore if we can apply similar methods to system prompts to defend against such attacks. \citet{xie2023defending} propose ``self-reminder'', \emph{i.e.,} appending a reminding prompt within the system prompt for defense. The attacks are collected from the JailbreakChat collection, and this strategy is effective for defending against them. \citet{zheng2024prompt} advocate for finding a good system prompt automatically, by investigating the representational difference between safe and harmful queries, and optimizing the safety prompts along the harmful refusal direction in the representation space. One intriguing takeaway is harmful / harmless queries can be distinguished in the representation space, different from the adversarial examples in vision. \citet{zhou2024robust} also optimize the safe system prompt, but in a more ``adversarial training'' fashion, that apply jailbreak algorithms with current safe prompts first and then find good replacement candidates in the same way as done by \citet{zou2023universal}. Concurrently, \citet{mo2024studious} propose prompt adversarial tuning, where an adversarial suffix is assumed, while a safe system prompt is jointly optimized with this suffix, \emph{with an additionally constructed benign loss} to improve helpfulness under normal queries. \citet{zhang2023defending} propose the idea of ``goal prioritization'', either without training (append prioritize safety than helpfulness and in-context examples to the system prompt) or with training (generate data pairs of prioritizing safety or helpfulness, finetune, and append prioritize safety prompt into system prompt). The former is effective for large-scale LLMs, while the latter improves safety of LLaMa-chat models. \citet{zhou2024defending} propose in-context adversarial game, where an attack LLM and a defense LLM interact on exchanging insights on successful jailbreaks, and defend by improving the system prompt. %
\citet{zou2024system} give the result that system prompt matters for jailbreaking, and shows conducting GA-based search over it could improve safety.

\paragraph{Checking-based, decoding-based, and Misc.} \citet{helbling2023llm} generate responses first, and then use the LLM itself to examine whether the output is harmful or not. They find such simple self-examination is powerful since the TPR reported can be up to $\sim 1.00$. \citet{wang2023self} propose to (1) tune the LM to enhance its capability on discriminating harmful / harmless content; (2) tune the LM to make it able to tag its own response; and (3) rewrite response if output is harmful. \citet{li2023rain} propose to suppress the attack performance by iteratively rewinding and re-examining the generated output. The method does not work well with small models, but works pretty fine with large (open-source) models. They find the strategy can improve generalization as well. \citet{xu2024safedecoding} train a safer model first, and use normalized $p_{\text{attacked}} + \alpha (p_{\text{safer}} - p_{\text{attacked}})$ over top-$k$ shared tokens for decoding to enhance safety. \citet{hasan2024pruning} show with original Wanda pruning \citep{sun2023simple}, the LLM can help resist direct jailbreaking prompts, \emph{e.g.,} with role-playing attacks. \citet{pi2024mllm} propose MLLM-Protector on safeguarding Visual LLMs by checking the output and then detoxifying the content. %
\citet{zhang2024intention} perform intention analysis on the input, and enforce the model generate policy-aligned outputs both by prompting. \citet{wang2024defending} propose backtranslation that guesses the input prompt directly, and reject if it is harmful. \citet{kim2024break} propose self-refinement which consists of generating a feedback and then refine the response to avoid harmful info output. They find using additional JSON and code formatting would improve safety. \citet{zeng2024autodefense} propose AutoDefense, which utilizes multiple agents on analyzing prompt, response and intention, to defend against attacks. %
\citet{hu2024gradient} propose Gradient Cuff, a sampling-based gradient-norm defense method, by rejecting those input prompts with large gradient norm on top of a majority-vote based filtering. %
\citet{ji2024defending} propose a method similar to \citet{robey2023smoothllm}, but for semantically-meaningful attacks, that paraphrases the input according to several criteria and conduct a majority vote for judging.

Several company-centered products also fall into this regime. For example, LLaMa-Guard \citep{inan2023llama} is trained on toxic data such that it is able to discriminate unsafe user prompts and outputs, respectively. IBM also propose a framework on constructing and deploying safeguard detection modules, and releases the details in a technical report \citep{achintalwar2024detectors}.

\subsection{Theory and experimental understanding}
\citet{wolf2023fundamental} assumes the decomposability of LLM output into a good and bad component, and show possible jailbreaking in theory by prompting the model with a sufficiently long input. \citet{kalai2023calibrated} use statistical tools to prove hallucination for calibrated LMs. \citet{lee2024mechanistic} study the representation in GPT-2. They train a base classifier for toxicity, and use the linear weight as a proxy of toxic vector. They find there are value vectors close to the toxic vector itself, that are not suppressed by DPO tuning \citep{rafailov2023direct}. %
\citet{wei2024assessing} use pruning and low-rank analysis on safe LM, and find (1) safe neurons and useful neurons are sparse; pruning the safe neurons or removing the safe ranks away degrades safety a lot, and (2) fixing the safe neurons in fine-tuning does not maintain safety.

\end{document}